\tikzset{
    -Latex,auto,node distance =1 cm and 1 cm,semithick,
    state/.style ={ellipse, draw, minimum width = 0.7 cm},
    point/.style = {circle, draw, inner sep=0.04cm,fill,node contents={}},
    bidirected/.style={Latex-Latex,dashed},
    el/.style = {inner sep=2pt, align=left, sloped}
}
\definecolor{light-gray}{gray}{0.95}
\newtheorem{proposition}{Proposition}
\newtheorem{lemma}{Lemma}
\newtheorem{theorem}{Theorem}
\algnewcommand{\LineComment}[1]{\State\(\vartriangleright\) #1}
\def\Sset{\mathcal{S}}
\def\Aset{\mathcal{A}}
\def\Xset{\mathcal{X}}
\renewcommand{\hat}{\widehat}
\mathchardef\mhyphen="2D
\newcommand{\indicator}[1]{\mathbbm{1}\left\{#1\right\}}
\newcommand{\E}[2]{\mathbb{E}_{#1} \left[#2\right]}
\newcommand{\prob}[1]{\mathbb{P}\left(#1\right)}
\newcommand{\abs}[1]{\left|#1\right|}
\newcommand{\mexp}{\ensuremath{\tt Exp3.S}\xspace}
\newcommand{\mmexp}{\ensuremath{\tt Exp4.S}\xspace}
\newcommand{\lints}{\ensuremath{\tt CD\mhyphen LinTS}\xspace}
\newcommand{\linucb}{\ensuremath{\tt CD\mhyphen LinUCB}\xspace}
\newcommand{\mucb}{\ensuremath{\tt SW\mhyphen mUCB}\xspace}
\newcommand{\umucb}{\ensuremath{\tt SW\mhyphen umUCB}\xspace}
\newcommand{\mts}{\ensuremath{\tt mTS}\xspace}
\newcommand{\umts}{\ensuremath{\tt umTS}\xspace}
\newcommand{\ts}{\ensuremath{\tt CD\mhyphen TS}\xspace}
\newcommand{\ucb}{\ensuremath{\tt CD\mhyphen UCB}\xspace}
\def\Regret{\mathcal{R}}
\def\Bregret{\mathcal{BR}}
\begin{document}

%

%

\twocolumn[

\aistatstitle{Non-Stationary Latent Bandits}

\aistatsauthor{Joey Hong \And Branislav Kveton \And  Manzil Zaheer \And Yinlam Chow} \vspace{10pt}

\aistatsauthor{Amr Ahmed \And Mohammad Ghavamzadeh \And Craig Boutilier}  \vspace{10pt}

\aistatsaddress{Google Research}]

\begin{abstract}
Users of recommender systems often behave in a non-stationary fashion, due to their evolving preferences and tastes over time. In this work, we propose a practical approach for fast personalization to non-stationary users. The key idea is to frame this problem as a latent bandit, where the prototypical models of user behavior are learned offline and the latent state of the user is inferred online from its interactions with the models. We call this problem a non-stationary latent bandit. We propose Thompson sampling algorithms for regret minimization in non-stationary latent bandits, analyze them, and evaluate them on a real-world dataset. The main strength of our approach is that it can be combined with rich offline-learned models, which can be misspecified, and are subsequently fine-tuned online using posterior sampling. In this way, we naturally combine the strengths of offline and online learning.
\end{abstract}

\section{Introduction}
\label{sec:introduction}

When users interact with recommender systems or search engines, their behavior is often guided by a \emph{latent state}, a context that cannot be observed. Examples of latent states are \emph{user preferences}, which persist over longer periods of time, and shorter-term \emph{user intents}. As the users interact, their latent state is slowly revealed by their responses. A good recommender should cater to the user based on the latent state, which first needs to be discovered.

We formalize the problem of recommending to a user under a changing latent state as a \emph{multi-armed bandit} \citep{lai1985,auer2002,bandit_book}. In this setting, the recommender is a learning agent and its actions are the arms of a bandit. After an arm is pulled, the agent observes a response from the user, which is also its reward. The response is a function of the observed context and an unobserved latent state. The goal of the learning agent is to maximize its cumulative reward over $n$ interactions with the user. The challenge is that the latent state of the user is unobserved and changes. This setting is known as \emph{piecewise-stationary bandits} \citep{hartland2007,sw_ucb,wmd}.

Both non-stationary bandits \citep{exp3_s,nonstationary_contextual_colt} and the special case of piecewise-stationary bandits \citep{hartland2007,sw_ucb,wmd} have been studied extensively in prior work. The main departures in this work are two fold. First, we assume that the latent state changes stochastically. Second, we assume that the learning agent knows, at least partially, the reward models of arms conditioned on each latent state. This assumption is realistic in most recommender domains, where a plethora of offline data allow for rich models of user behavior, conditioned on the user type, to be learned offline. Under these assumptions, the problem of learning to act can be solved efficiently by Thompson sampling (TS) \citep{thompson33likelihood,chapelle11empirical,russo_posterior_sampling} over latent states, which we propose, analyze, and extensively evaluate. To the best of our knowledge, this is the first analysis of TS in this highly practical setting. 

Our approach has many benefits over prior works. Unlike adversarial techniques \citep{exp3_s,nonstationary_contextual_colt}, we leverage the stochastic nature of the environment, which results in practical algorithms. Unlike stochastic algorithms, which either passively \citep{d_ucb,sw_ucb} or actively \citep{wmd,cpbayesian,cao_cpdetect} adapt to the environment, our algorithms never forget the past or reset their model. In a sense, our approach is the most natural technique under the assumption of knowing, at least partially, the model of the environment. This assumption is natural in any domain where a plethora of offline data is available and leads to major gains over prior work.

Our paper is organized as follows. In \cref{sec:setting}, we introduce our setting of non-stationary latent bandits. In \cref{sec:algorithms}, we propose two posterior sampling algorithms: one knows the exact model of the environment and the other knows a prior distribution over potential models. In \cref{sec:analysis}, we derive gap-free bounds on the $n$-round regret of both algorithms. The algorithms are evaluated in \cref{sec:experiments}. Finally, we discuss related work in \cref{sec:related work} and conclude in \cref{sec:conclusions}.






\section{Setting}
\label{sec:setting}

We adopt the following notation. Random variables are capitalized. Greek letters denote parameters and we explicitly state beforehand when they are random. The set of arms is $\Aset = [K]$, the set of contexts is $\Xset$, and the set of latent states is $\Sset$, with $|\Sset| \ll K$. 

The \emph{latent bandit} \citep{latent_bandits} is an online learning problem, where the learning agent interacts with an environment over $n$ rounds as follows. In round $t \in [n]$, the agent observes context $X_t \in \Xset$, chooses action $A_t \in \Aset$, then observes reward $R_t \in \mathbb{R}$. The random variable $R_t$ depends on the context $X_t$, action $A_t$, and latent state $S_t \in \Sset$. The \emph{history} up to round $t$ is
\begin{align*}
  \mathcal{H}_t
  = (X_1, A_1, R_1, \hdots, X_{t - 1}, A_{t - 1}, R_{t - 1})\,.
\end{align*}
The \emph{policy} of the agent in round $t$ is a mapping from its history $\mathcal{H}_t$ and context $X_t$ to the choice of action $A_t$. In prior work \citep{latent_bandits,latent_contextual_bandits,latent_bandits_revisited}, the latent state is assumed to be constant over all rounds, which we relax in this work.

The reward is sampled from a \emph{conditional reward distribution}, $P(\cdot \mid A, X, S; \theta)$, which is parameterized by reward model parameters $\theta \in \Theta$, where $\Theta$ is the space of feasible reward models of the environment. Let $\mu(a, x, s; \theta) = \E{R \sim P(\cdot \mid a, x, s; \theta)}{R}$ be the \emph{mean reward} of action $a$ in context $x$ and latent state $s$ under model $\theta$. 
We assume that the rewards are $\sigma^2$-sub-Gaussian with variance proxy $\sigma^2$,
\begin{align*}
  \E{R \sim P(\cdot \mid a, x, s; \theta)}{\exp[\lambda (R - \mu(a, x, s; \theta))]}
  \leq \exp\left[\frac{\sigma^2 \lambda^2}{2}\right],
\end{align*}
for all $a$, $x$, $s$, and $\lambda > 0$. 
Note that we do not make strong assumptions about the form of the reward: $\mu(a, x, s; \theta)$ can be any complex function of $\theta$, and contexts can be generated by any arbitrary process.

In the \emph{non-stationary latent bandit}, we additionally consider latent states that evolve over time. The initial latent state is drawn according to the prior distribution as $S_1 \sim P_1(s)$. Then, in round $t$, the underlying latent state $S_t \in \Sset$ evolves according to $S_t \sim P(\cdot \mid S_{t-1}; \phi)$, where $\phi \in \mathbb{R}^{|\Sset| \times |\Sset|}$ is the transition matrix. 
The graphical model is shown in \cref{fig:bandit_graph}.
\begin{figure}
\centering
\scalebox{0.8}{\begin{tikzpicture}
  \node[state] (s) {$S_t$} ; %
  \node[state, fill=light-gray, below= of s]    (r) {$R_t$} ; %
  \node[state, fill=light-gray, below=of r]     (x) {$X_t$} ; %
  \node[state, rectangle, fill=light-gray, right=of r, xshift=-0.2cm, yshift=-0.4cm]    (a) {$A_t$} ; %
      
  \node[left=of s, xshift=0.4cm] (prev_s) {$\ldots$} ; 
  \node[right=of s, xshift=-0.4cm] (next_s) {$\ldots$} ; %
  
  \node[state, left=of prev_s, xshift=0.4cm] (s_1) {$S_1$} ; 
  \node[state, right=of next_s, xshift=-0.4cm] (s_n) {$S_n$} ; %
  \node[below=of s_1] (r_1) {$\ldots$} ; 
  \node[below=of s_n] (r_n) {$\ldots$} ; %

  \path (x) edge (a);
  \path (x) edge (r);
  \path (a) edge (r);
  \path (s) edge   (r);

  \path (prev_s) edge (s);
  \path (s) edge (next_s);
  \path (s_1) edge (prev_s);
  \path (next_s)  edge (s_n);
  \path (s_1) edge (r_1);
  \path (s_n)  edge (r_n);
\end{tikzpicture}}
\caption{Graphical model for non-stationary latent bandits.}
\label{fig:bandit_graph}
\vspace{-0.1in}\end{figure}
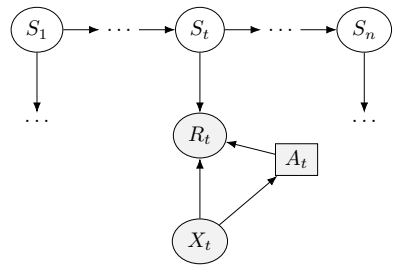
This is useful for applications in which user preferences, tasks, or intents change. For example, the latent states $\Sset$ could be different behavior modes that the user switches between them over time. 

Let $\theta_*, \phi_*$ be the true model parameters, so that the reward in round $t$ is sampled as $R_t \sim P(\cdot \mid A_t, X_t, S_t; \theta_*)$, and the next latent state is sampled as $S_{t+1} \sim P(\cdot \mid S_t; \phi_*)$. Note that the next round's context and latent state are unaffected by the action chosen in the previous round. This is a specific case of POMDPs, where the actions taken by the agent do not affect the dynamics of the environment.


Performance of bandit algorithms is typically measured by regret. For a variable $X$, let $X_{i:j}$ denote its concatenation from rounds $i$ to $j$, inclusive. For a fixed latent state sequence $s_{1:n} \in \Sset^n$ and model $\theta_* \in \Theta$, let $A_{t, *} = \arg\max_{a \in \Aset} \mu(a, X_t, s_t, \theta_*)$ be the optimal arm. Then the \emph{expected $n$-round regret} is defined as
\begin{align}
    &\Regret(n; \theta_*, s_{1:n}) \label{eqn:regret} \\
    &\,= \E{}{\sum_{t=1}^n \mu(A_{t, *}, X_t, s_t; \theta_*) - \mu(A_t, X_t, s_t; \theta_*)}\,. \nonumber
\end{align}

In this work, we consider the Bayes regret, which includes an expectation over latent state and/or model randomness. 
We use two different notions of the Bayes regret. The first one is when the true model $\theta_*, \phi_*$ is fixed, and the expectation is only over randomness in latent states. The \emph{$n$-round Bayes regret with fixed model} is
\begin{align}
\Bregret(n; \theta_*, \phi_*) 
=\E{S_{1:n} \sim \phi_*}{\Regret(n; \theta_*, S_{1:n}) \mid \theta_*, \phi_*} \label{eq:bayes_regret} 
\end{align}
where $A_{t, *} = \arg\max_{a \in \Aset} \mu(a, X_t, S_t; \theta_*)$ is also a function of the random latent state.
We also study the case where the true model is sampled from prior $\theta_*, \phi_* \sim P_1$. Then $A_{t, *}$ depends on the \emph{random latent state and model}, and the \emph{$n$-round Bayes regret} is
\begin{align}
  \Bregret(n)
  & = \E{}{\Bregret(n; \theta_*, \phi_*)}\,.
\label{eq:bayes_regret_2}
\end{align}
It is important to note that the Bayes regret is a weaker metric than regret, which is worst case over latent sequences and models. 
However, we are often more concerned in practice with the average performance over a range of latent state sequences and models, that arise with multiple users or multiple sessions with the same user. This quantity is sufficiently captured by Bayes regret. 

\section{Model-Based Thompson Sampling}
\label{sec:algorithms}

Recall that in round $t$, we have that $S_{t}$ is the true latent state, and model parameters $\theta_*, \phi_*$ determine the conditional rewards $P(R_t \mid A_t, X_t, S_t; \theta_*)$ and transition probabilities $P(S_t \mid S_{t-1}; \phi_*)$, respectively.

Our proposed algorithm is Thompson sampling (TS) with an offline-learned model. At a high-level, the TS algorithm operates by sampling actions stochastically according to $\prob{A_t = a \mid \mathcal{H}_t, X_t} = \prob{A_{t, *} = a \mid \mathcal{H}_t, X_t}$.
In \cref{sec:known_model}, we consider a simple case where the true model is recovered offline. In most realistic scenarios though, the true model is unknown, and we only know its uncertain estimate. In \cref{sec:unknown_model}, we consider an agnostic case, where only priors over the reward and transition models are known, that is $\theta_* \sim P_1(\theta), \phi_* \sim P_1(\phi)$. We use $P_1$ to denote the prior over all environment parameters, including the initial state $S_1$, and model and transition parameters $\theta_*, \phi_*$. We parameterize the distribution to make it clear which of the environment parameters we refer to.

\subsection{Known Models}
\label{sec:known_model}

First, we propose model-based Thompson sampling (\mts), where the true reward and transition models are known, that is exact $\theta_*, \phi_*$ are recovered offline. In this case, TS reduces to sampling a belief state $B_t \in \Sset$ from its posterior distribution over latent states, and acting according to $B_t$ and model parameters. In particular, $A_t = \arg\max_{a \in \Aset}\mu(a, X_t, B_t; \theta_*)$. The pseudocode of \mts is detailed in \cref{alg:thompson_1}. In \eqref{eq:posterior}, we compute the posterior as a filtering distribution $P_t(s) = \prob{S_t = s \mid \mathcal{H}_t}$. Since the model is known exactly, this can be computed as an incremental update from $P_{t-1}$. Then, after sampling $B_t$ from the posterior, the algorithm simply chooses the best-performing action from the conditional reward model for $B_t$. 

\begin{algorithm}[ht]
\caption{\mts}\label{alg:thompson_1}
\begin{algorithmic}[1]
  \State \textbf{Input:}
  \State \quad Model parameters $\theta_*, \phi_*$
  \State \quad Prior over initial latent state $P_1(s)$
  \Statex
  \For {$t \gets 1, 2, \hdots$}
    \State Sample $B_t \sim P_t$
    \State Select 
    $A_t \leftarrow \arg\max_{a \in \Aset} \mu(a, X_t, B_t; \theta_*)$
    \State Observe $R_t$. Update posterior
    \begin{align}
    &P_{t+1}(s_{t+1}) \propto \label{eq:posterior} \\
    &\, \sum_{s_t \in \Sset} P_{t}(s_t) P(s_{t+1} \mid s_t; \phi_*)) P(R_t \mid A_t, X_t, s_t; \theta_*) \nonumber
    \end{align}
\EndFor
\end{algorithmic}
\end{algorithm}\vspace{-0.1in}

\subsection{Uncertain Models}
\label{sec:unknown_model}

As alluded to earlier, it is unrealistic to assume that the true model parameters $\theta_*, \phi_*$ can be recovered from offline data. Because of this, many methods in prior literature attempt to learn uncertainty over the model, sometimes called epistemic uncertainty \citep{model_uncertainty}, in the form of a prior over model parameters. In practice, learning such prior may be intractable for complex models, but can be approximated, for instance by an ensemble of bootstrapped models \citep{model_uncertainty}.

We propose uncertainty-aware model-based Thompson sampling (\umts), where the reward and transition models are estimated with uncertainty. Formally, we are given priors $P_1(\theta), P_1(\phi)$ such that $\theta_* \sim P_1(\theta), \phi_* \sim P_1(\phi)$.
In \umts, we maintain a joint posterior distribution $P_t(s, \theta) = \prob{S_t = s, \theta_* = \theta \mid \mathcal{H}_t}$, sample a believed latent state and reward model $B_t, \theta$ from this distribution, and act according to $A_t = \arg\max_{a \in \Aset} \mu(a, X_t, B_t; \theta)$. The joint posterior is given in \eqref{eq:joint_posterior} and the algorithm is detailed in \cref{alg:thompson_2}. Because transition parameters $\phi$ are not used for decision making, they get marginalized in the posterior.

\begin{algorithm}[ht]
\caption{\umts}\label{alg:thompson_2}
\begin{algorithmic}[1]
  \State \textbf{Input:}
  \State \quad Prior over model parameters $P_1(\theta), P_1(\phi)$
  \State \quad Prior over initial latent state $P_1(s)$
  \Statex
  \State Initialize $P_1(s, \theta) \propto P_1(s_1)P_1(\theta)$
  \For {$t \gets 1, 2, \hdots$}
    \State Sample $B_t, \theta \sim P_t$
    \State Select 
    $A_t \leftarrow \arg\max_{a \in \Aset} \mu(a, X_t, B_t; \theta)$
    \State Observe $R_t$. Update joint posterior
    \begin{align}
    &P_{t+1}(s_{t+1}, \theta) \propto \label{eq:joint_posterior} \\
    &\hspace{-5pt}\int_{\phi} P_1(\theta, \phi)\smashoperator[lr]{\sum_{s_{1:t} \in \Sset^t}}P(s_{1:t+1} \mid \phi) P(\mathcal{H}_{t+1} \mid s_{1:t}; \theta) d\phi \nonumber
    \end{align}
\EndFor
\end{algorithmic}
\end{algorithm}\vspace{-0.1in}

Note that the joint posterior in \eqref{eq:joint_posterior} requires a summation over past latent state trajectories and is therefore intractable. We propose and analyze \cref{alg:thompson_2} as a computation-inefficient algorithm, but approximate it using sequential Monte Carlo (SMC) in practice \citep{smc}.

\subsection{Approximate Inference for Uncertain Models}
In this section, we propose and approximate SMC algorithm to \umts.
Particularly, we use particle filtering with $N$ particles \citep{smc,sarkka2013bayesian}, where each particle maintains its own latent state trajectory. At round $t$, particle $i$ independently samples believed state and model $B_t^{(i)}, \theta^{(i)}, \phi^{(i)} \sim P_t^{(i)}$, where joint posterior $P_t^{(i)}\left(s, \theta, \phi\right) = \prob{S_t = s, \theta_* = \theta, \phi_* = \phi \mid \mathcal{H}_t, B_{1:t-1}^{(i)}}$ additionally depends on the particle's past latent trajectory. 

For each round $t$, the SMC algorithm maintains a weight $w_t$ over particles, and acts according to the weighted average of the particles' latent state and model parameters. The weights for all particles are updated using the incremental likelihood of the resulting observations in round $t$ as in \eqref{eq:particle_weight_update} and renormalized. 
If the current weights $w_t$ satisfy a resampling criterion, then the filtering algorithm resamples $N$ particles in proportion to their weights with replacement. The algorithm is detailed in \cref{alg:thompson_2_smc}.

\begin{algorithm}[ht]
\caption{\umts (Particle Filtering)}\label{alg:thompson_2_smc}
\begin{algorithmic}[1]
  \State \textbf{Input:}
  \State \quad Prior over model parameters $P_1(\theta), P_1(\phi)$
  \State \quad Prior over initial latent state $P_1(s)$
  \State \quad Number of particles $N$
  \Statex
  \State Sample $B_1^{(i)}, \theta_1^{(i)} \sim P_1, i = 1, \hdots, N$
  \State Set $w_1 \in \mathbb{R}^N$ s.t. $w_{1, i} \leftarrow N^{-1}$, $i = 1, \hdots, N$
  \For {$t \gets 1, 2, \hdots$}
    \State For $a \in \Aset$, set $\mu_t(a) \in \mathbb{R}^N$ s.t.
    $$
    \mu_{t, i}(a) \leftarrow \mu(a, X_t, B_t^{(i)}, \theta^{(i)}), \,i  = 1, \hdots, N
    $$
    \State Select
    $A_t \leftarrow \arg\max_{a \in \Aset} w_t \cdot \mu_t(a)$
    \For {$i \gets 1, \hdots, N$}
        \State Sample 
        $
        B_{t+1}^{(i)}, \theta^{(i)}, \phi^{(i)} \sim P_{t+1}^{(i)}
        $
        as in \eqref{eq:joint_posterior_decomposition}
        \State Set $w_{t+1} \in \mathbb{R}^N$ s.t.
        \begin{align}\hspace{-15pt}
        w_{t+1, i} 
        \leftarrow w_{t, i} \,
        \frac{P(R_t, B_{t}^{(i)} \mid A_t, X_t, B_{t-1}^{(i)}; \theta^{(i)}, \phi^{(i)})}
        {P(B_t^{(i)} \mid R_t, A_t, X_t, B_{t-1}^{(i)}; \theta^{(i)}, \phi^{(i)})}
        \label{eq:particle_weight_update}
        \end{align}
    \EndFor
    \State Compute $ESS \leftarrow \left(\sum_{i=1}^N w_{t+1, i}^2 \right)^{-1}$. Resample particles if $ESS$ is too small.
\EndFor
\end{algorithmic}
\end{algorithm}\vspace{-0.1in}

For a matrix (vector) $M$, we let $M_i$ denote its $i$-th row (element). Using this notation, we can write $\theta = (\theta_s)_{s \in \Sset}$ and $\phi = (\phi_s)_{s \in \Sset}$ as vectors of conditional parameters, one for each latent state. We can show that the sampling step for each particle can be done tractably if the reward model prior $P_1(\theta_s)$ and likelihood $P(r \mid x, a, s; \theta)$ are conjugates distributions in the exponential family, and the transition prior for each latent state $P_1(\phi)$ factors as Dirichlet for each state $s$, i.e. $\phi_{*, s} \sim \mathsf{Dir}((\alpha_{s, s'})_{s' \in \Sset})$. 
The key detail is that now we can obtain samples from the joint posterior using the particles and avoid the intractable sum over all possible past trajectories as in \eqref{eq:joint_posterior} in \cref{alg:thompson_2}. 
For particle $i$, we decompose the joint posterior as
\begin{align}
    &P^{(i)}_t\left(s_t, \theta, \phi\right) \label{eq:joint_posterior_decomposition} \\
    &\,\propto P\left(\phi \mid B^{(i)}_{1:t-1}\right)
    P\left(s_t \mid B^{(i)}_{t-1}; \phi\right)
    P\left(\theta \mid \mathcal{H}_t, B^{(i)}_{1:t-1}\right)\,. \nonumber
\end{align}
Hence, sampling from the joint posterior can be done by first sampling the transition parameters, then believed latent state, and finally reward parameters for that state.

In the case where prior $\phi_{*, s} \sim \mathsf{Dir}((\alpha_{s, s'})_{s' \in \Sset})$ is Dirichlet with parameters $\alpha_s \in \mathbb{R}^{|\Sset|}$, the posterior is also Dirichlet. For particle $i$, the posterior parameters are simply updated with the observed transitions in its latent state trajectory $B_{1:t-1}^{(i)}$. Formally, the posterior over state transitions from state $s$ would be: $\phi^{(i)}_s \mid B^{(i)}_{1:t-1} \sim$ 
\begin{align*}
\mathsf{Dir}\left(\left(\alpha_{s, s'} + \textstyle\sum_{\ell=1}^{t-1} \indicator{B^{(i)}_{t-1} = s, B^{(i)}_t = s'}\right)_{s' \in \Sset}\right)\,.
\end{align*}
The transition matrix $\phi^{(i)}$ can be tractably sampled from this Dirichlet posterior. The next latent state $B_t^{(i)}$ is easily sampled from $\phi^{(i)}$.

Recall that we assumed that the reward model prior and conditional reward distribution belong to the exponential family, which covers commonly studied reward distributions, such as Gaussian and Bernoulli. We assume that the reward likelihood is written
\begin{align*}
    P(r \mid a, x, s; \theta) = \exp\left[f(r, a, x)^\top \kappa(\theta_s) - g(\theta_s) \right],
\end{align*}
where $f(r, a, x)$ are sufficient statistics for the observed data, $\kappa(\theta_s)$ are the natural parameters, and $g(\theta_s)$ is the log-partition function. Then, the prior over $\theta_s$ is the conjugate prior of the likelihood, which has the general form of
\begin{align*}
    P_1(\theta_s) \propto \exp\left[\psi_{s, 1}^\top \kappa(\theta_s) - m_{s, 1} g(\theta_s) \right],
\end{align*}
where $\psi_{s, 1}, m_{s, 1}$ are parameters controlling the prior and $H(\psi_{s, 1}, m_{s, 1})$ is the normalizing factor.

For particle $i$, round $t$, and state $s$, updating the posterior over $\theta_s$ simply involves updating the prior parameters with sufficient statistics from the data. Specifically, we have
$m^{(i)}_{s, t} \leftarrow m_{s, 1} + \sum_{\ell=1}^{t-1} \indicator{B^{(i)}_\ell = s}$ and
\begin{align*}
    \psi^{(i)}_{s, t} \leftarrow \psi_{s, 1} + \sum_{\ell=1}^{t-1} \indicator{B^{(i)}_\ell = s}f(R_\ell, A_\ell, X_\ell),
\end{align*}
form the conditional posterior
\begin{align*}
 P(\theta_s \mid \mathcal{H}_t, B^{(i)}_{1:t-1}) 
 \propto \exp\left[\psi^{(i)\,\top}_{s,t} \kappa(\theta_s) - m^{(i)}_{s, t} g(\theta_s) \right].
\end{align*}
Hence, each term in the joint posterior decomposition in \eqref{eq:joint_posterior_decomposition} has an analytic form, and can be tractably sampled from.


\section{Analysis}
\label{sec:analysis}

In this section, we derive Bayes regret bounds for \mts and \umts. Recall that $A_{t, *}$ is the optimal action in round $t$. The key idea in our analysis is that the conditional distributions of $A_{t, *}$ and $A_t$, as sampled in \mts, are identical. Formally,  $\E{}{f(A_{t, *}) \mid X_t, \mathcal{H}_t} = \E{}{f(A_t) \mid X_t, \mathcal{H}_t}$ for any function $f$ of history $\mathcal{H}_t$ and context $X_t$. Following \citet{russo_posterior_sampling}, we design $f$ as an upper confidence bound (UCB) in a suitable UCB algorithm. In \cref{sec:sliding_window_mucb}, we first propose that algorithm. Then, in \cref{sec:regret_decomposition}, we state a key regret decomposition and show how to derive Bayes regret bounds for our algorithms using the UCB algorithm. In \cref{sec:regret_bounds}, we present our regret bounds.

\subsection{Model-Based UCB}
\label{sec:sliding_window_mucb}

In this section, we propose \mucb, a model-based sliding-window UCB algorithm that uses an offline-learned model to identify non-stationary latent states. In the domain of non-stationary bandits, \citet{d_ucb} and \citet{sw_ucb} proposed two passive adaptations to the UCB algorithm: discounting past observations or ignoring them using a sliding window. Without loss of generality, we focus on the latter due to being better suited for abrupt changes in latent state (as opposed to gradual ones). The algorithm is similar to that proposed by \citet{latent_bandits} and \citet{latent_bandits_revisited} for stationary environments, but augmented with an additional sliding window. The novelty is that the sliding window allows for sublinear regret when the environment is non-stationary. 

\mucb is detailed in \cref{alg:sw-ucb}. At a high level, it takes model parameters $\theta_*$ as an input. We discuss how to change \mucb when $\theta_*$ is not known in the Appendix. \mucb maintains a set of latent states $C_t$ \emph{consistent} with the rewards observed in the most recent $\tau$ rounds, where $\tau$ is a tunable parameter. In round $t$, it chooses a belief state $B_t$ from $C_t$ and the arm $A_t$ with the maximum expected reward in that state, $(B_t, A_t) = \arg\max_{s \in C_t, a \in A} \mu(a, X_t, s; \hat{\theta})$.

In \mucb, the UCB for action $a$ in round $t$ is
\begin{align}
  \textstyle
  U_t(a)
  = \arg\max_{s \in C_t} \mu(a, X_t, s; \hat{\theta})\,.
  \label{eq:ucb}
\end{align}
The consistent latent states are determined by \say{gap} $G_t(s)$, defined in \eqref{eqn:ucb_gap}. If $G_t(s)$ is high, \mucb marks state $s$ as \emph{inconsistent} and does not consider it in estimating UCB $U_t$.

\subsection{Regret Decomposition}
\label{sec:regret_decomposition}

Note that for any action $a \in \Aset$, the upper confidence bound $U_t(a)$ in \eqref{eq:ucb} is deterministic given $X_t$ and $\mathcal{H}_t$. This observation leads to the following regret decomposition.

\begin{proposition}
The Bayes regret of \mts decomposes
\begin{align}
    &\Bregret(n; \theta_*, \phi_*) \label{eqn:posterior_regret_decomposition} \\
    &= \E{}{\sum_{t=1}^n \mu(A_{t, *}, X_t, S_t; \theta_*) - U_t(A_{t, *}) \mid \theta_*, \phi_*} \nonumber \\ 
    &\qquad+ \E{}{\sum_{t=1}^n U_t(A_t) - \mu(A_t, X_t, S_t; \theta_*) \mid \theta_*, \phi_*} \nonumber \,.
\end{align}
\label{prop:regret_decomposition}
\vspace{-0.2in}\end{proposition}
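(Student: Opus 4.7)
The plan is to observe that the definition of $\Bregret(n;\theta_*,\phi_*)$ in \eqref{eq:bayes_regret} can be rewritten by adding and subtracting $U_t(A_{t,*})$ and $U_t(A_t)$ inside the sum. Concretely, I would write
\begin{align*}
\mu(A_{t,*}, X_t, S_t; \theta_*) - \mu(A_t, X_t, S_t; \theta_*)
&= \bigl[\mu(A_{t,*}, X_t, S_t; \theta_*) - U_t(A_{t,*})\bigr] \\
&\qquad + \bigl[U_t(A_t) - \mu(A_t, X_t, S_t; \theta_*)\bigr] \\
&\qquad + \bigl[U_t(A_{t,*}) - U_t(A_t)\bigr],
\end{align*}
sum this identity over $t\in[n]$ and take the conditional expectation given $\theta_*,\phi_*$. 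Linearity of expectation recovers the first two terms of \eqref{eqn:posterior_regret_decomposition}, so it remains to show that the third term contributes zero, i.e.\ $\E{}{\sum_{t=1}^n U_t(A_{t,*}) - U_t(A_t) \mid \theta_*,\phi_*}=0$.

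For this, I would apply the tower rule term-by-term, conditioning on $(\mathcal{H}_t, X_t)$. The key posterior sampling identity for \mts is that, given $(\mathcal{H}_t, X_t)$, the belief state $B_t$ is drawn from the exact posterior $P_t(\cdot\mid \mathcal{H}_t)$ of the true latent $S_t$, and the selected action $A_t=\arg\max_a \mu(a,X_t,B_t;\theta_*)$ is therefore a deterministic function of the same posterior sample. Consequently, the conditional distribution of $A_t$ given $(\mathcal{H}_t, X_t)$ matches that of $A_{t,*}=\arg\max_a \mu(a,X_t,S_t;\theta_*)$, which is the analogous deterministic function of the true $S_t\sim P_t(\cdot\mid\mathcal{H}_t)$. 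Since by the proposition's preamble $U_t$ is deterministic given $(\mathcal{H}_t, X_t)$, this equidistribution yields
\begin{align*}
\E{}{U_t(A_{t,*}) \mid \mathcal{H}_t, X_t} = \E{}{U_t(A_t)\mid \mathcal{H}_t, X_t},
\end{align*}
and so each term of the third sum vanishes in expectation.

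The main (minor) obstacle is making precise the claim that $A_{t,*}$ and $A_t$ are conditionally equidistributed given $(\mathcal{H}_t,X_t)$. One has to be careful that the relevant posterior is indeed the filtering distribution $P_t$ of $S_t$ conditional on $\mathcal{H}_t$ only (not also on $S_{1:t-1}$), and that $X_t$ is exogenous so that conditioning on it does not alter the posterior over $S_t$. Once this is verified, the argument is a direct instantiation of the standard Russo–Van Roy posterior sampling decomposition \citep{russo_posterior_sampling}, and summing over $t$ together with the tower rule completes the proof.
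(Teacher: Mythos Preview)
Your proposal is correct and matches the paper's own argument essentially verbatim: the paper simply cites \citet{russo_posterior_sampling} and notes that the decomposition follows from rewriting the Bayes regret via $U_t$ together with the observation that $U_t$ is $(\mathcal{H}_t,X_t)$-measurable and $A_t$ and $A_{t,*}$ are conditionally equidistributed given $(\mathcal{H}_t,X_t)$. Your additional care about why conditioning on $X_t$ does not perturb the posterior over $S_t$ is a welcome clarification but does not deviate from the paper's approach.
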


The proof is due to \cite{russo_posterior_sampling}, and follows from rewriting the Bayes regret in terms of $U_t$ and the observation above. Note that while we use the fixed-model formulation of the Bayes regret in \eqref{eq:bayes_regret}, the proposition still holds for general \eqref{eq:bayes_regret_2}.

Hence, though the UCBs $U_t$ are not used by our TS algorithms, they can be used to \emph{analyze} them due to the decomposition in \eqref{eqn:posterior_regret_decomposition}. Specifically, our derivation of a Bayes regret bound for $\mts$ proceeds according to the outline below.

\textbf{Step 1: $S_t \in C_t$ with high probability.} We show that the true latent state is in our consistent sets with a high probability. This means that the first term in \eqref{eqn:posterior_regret_decomposition} is small.

\textbf{Step 2: Regret bound for \mucb.} This follows from bounding both terms in \eqref{eqn:posterior_regret_decomposition}. The second term is the sum of confidence widths over time, or difference between $U_t$ and the true mean reward. The widths decrease, under appropriate conditions, whenever an arm is pulled.

\textbf{Step 3: Bayes regret bound for \mts.} We exploit the fact that the Bayes regret decomposition for \mts in \eqref{eqn:posterior_regret_decomposition} can be equivalently stated for the regret of \mucb. Hence, any UCB regret bound transfers to a TS Bayes regret bound.

\begin{algorithm}[t]
\caption{\mucb}
\begin{algorithmic}[1]
  \State \textbf{Input:} Model parameters $\theta_*$, window size $\tau$
  \Statex
  \For{$t \gets 1, 2, \hdots$}
    \State Define $N_t(s) \leftarrow \sum_{\ell = \max\{1, t - \tau\}}^{t-1}\indicator{B_\ell = s}$ and 
    \begin{align}\hspace{-10pt}
    \label{eqn:ucb_gap}
    G_{t}(s) \leftarrow \smashoperator[lr]{\sum_{\ell = \max\{1, t - \tau\}}^{t-1}} \indicator{B_\ell = s}\left(\mu(A_\ell, X_\ell, s; \theta_*) - R_\ell\right)
    \end{align}
    \State Set of consistent latent states
    $$C_t \leftarrow \left\{s \in S: G_t(s) \leq \sigma \sqrt{6N_t(s)\log n} \right\}$$
    \State Select $B_t, A_t \leftarrow \arg\max_{s \in C_t, a \in A} \mu(a, X_t, s; \theta_*)$
  \EndFor
\end{algorithmic}
\label{alg:sw-ucb}
\end{algorithm}

For Step 3 to hold, our analysis in Step 2 needs to be worst-case over suboptimal latent states and actions. This is why we cannot use the fact that actions $A_t$ maximize $U_t$ in \eqref{eq:ucb}, and derive gap-free bounds.

\subsection{Regret Bounds}
\label{sec:regret_bounds}

In this section, we state Bayes regret bounds for \mts with known model and \umts with uncertain model. As described in \cref{sec:regret_decomposition}, our bounds follow from that on \mucb and \cref{prop:regret_decomposition}. That bound is stated below in terms of the number of stationary segments in a horizon of $n$ rounds, $L = \sum_{t = 2}^n\indicator{s_t \neq s_{t-1}} + 1$. We defer proofs of all claims to Appendix.

\begin{lemma}
For known model parameters $\theta_*$ with $\hat{\theta} = \theta_*$, and optimal choice of $\tau$, the $n$-round regret of $\mucb$ is
\begin{align*}
    \Regret(n; \theta_*, s_{1:n})
    &= \mathcal{O}\left(n^{2/3}\sqrt{|\Sset|L\log n}\right).
\end{align*}
\label{lem:ucb_regret}
\vspace{-0.2in}\end{lemma}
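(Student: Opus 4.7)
The plan is to follow the sliding-window UCB template sketched in \cref{sec:regret_decomposition} and derive an explicit bound of the form $L\tau + n\sqrt{|\Sset|\log n/\tau}$, which when optimized in $\tau$ gives the claimed rate. Fix a window size $\tau$ and partition the $n$ rounds into \emph{boundary} rounds (those whose sliding window $[\max\{1,t-\tau\},t-1]$ straddles a latent-state change, of which there are at most $(L-1)\tau$) and \emph{interior} rounds (those whose window lies inside a single stationary segment). Boundary rounds contribute a trivial $O(L\tau)$ to regret, using that the per-round mean gap is uniformly bounded. The bulk of the work is to bound the interior contribution.

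First I would set up a good event. For each $(t,s)\in[n]\times\Sset$, condition on the window being clean. Then each $R_\ell$ in the window has mean $\mu(A_\ell,X_\ell,s_t;\theta_*)$, so
\begin{align*}
    G_t(s) - \sum_{\ell}\indicator{B_\ell = s}\bigl(\mu(A_\ell,X_\ell,s;\theta_*) - \mu(A_\ell,X_\ell,s_t;\theta_*)\bigr)
\end{align*}
is a sum of $N_t(s)$ zero-mean $\sigma^2$-sub-Gaussian terms. A Hoeffding tail and a union bound over $(t,s)$ yield an event $\mathcal{E}$ of probability at least $1-O(|\Sset|/n)$ on which the bracketed deviation is at most $\sigma\sqrt{6N_t(s)\log n}$. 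Specializing to $s=s_t$, whose centered mean is zero on a clean window, gives $s_t \in C_t$ in every interior round; this is Step~1 of the outline in \cref{sec:regret_decomposition}.

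For an interior round, the inclusion $s_t \in C_t$ together with $A_t\in\arg\max_a U_t(a)$ gives the standard UCB chain $\mu(A_{t,*},X_t,s_t;\theta_*) \leq U_t(A_{t,*}) \leq U_t(A_t) = \mu(A_t,X_t,B_t;\theta_*)$, so the instantaneous regret is bounded by the model discrepancy $\mu(A_t,X_t,B_t;\theta_*)-\mu(A_t,X_t,s_t;\theta_*)$. Because $B_t\in C_t$, the defining inequality $G_t(B_t)\leq\sigma\sqrt{6N_t(B_t)\log n}$ combined with the concentration on $\mathcal{E}$ implies that the cumulative discrepancy over the last $N_t(B_t)$ window rounds with $B_\ell=B_t$ is $O(\sqrt{N_t(B_t)\log n})$. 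A book-keeping argument --- each interior round's discrepancy reappears inside the $G_{t'}(B_t)$ of future rounds $t'\in(t,t+\tau]$ in the same segment, and is charged against $\sigma\sqrt{6N_{t'}(B_t)\log n}$ --- converts this into an aggregate interior-regret bound of order $n\sqrt{|\Sset|\log n/\tau}$, analogous to how \mucb turns stationary UCB's $\sqrt{n}$ rate into $n/\sqrt{\tau}$.

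Combining boundary and interior gives $O(L\tau + n\sqrt{|\Sset|\log n/\tau})$; choosing $\tau\asymp(n^2|\Sset|\log n/L^2)^{1/3}$ balances the two terms and produces a bound of order $n^{2/3}(L|\Sset|\log n)^{1/3}$, which is dominated by the claimed $\mathcal{O}(n^{2/3}\sqrt{|\Sset|L\log n})$. The main obstacle will be the interior summation: with arbitrary contexts, sliding-window averages of past discrepancies do not directly control the instantaneous discrepancy at round $t$, so the accounting must carefully align each round's contribution with the future $G_{t'}$ expressions into which it is eventually absorbed while respecting segment boundaries. The remaining pieces --- the concentration, the consistency of $s_t$, and the $L\tau$ boundary accounting --- are routine.
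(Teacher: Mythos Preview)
Your outline matches the paper's proof: the same boundary/interior split (trivially bounding the at most $L\tau$ rounds whose window straddles a change), the same concentration event yielding $s_t\in C_t$ on interior rounds, and the same target $O(n\sqrt{|\Sset|\log n/\tau})$ for the interior confidence-width sum. The one substantive gap is the interior book-keeping, which you rightly flag as the obstacle. Your proposed charging --- that the round-$t$ discrepancy reappears in $G_{t'}(B_t)$ for all $t'\in(t,t+\tau]$ --- is true but not directly usable: each $G_{t'}$ controls only a running \emph{sum}, and every discrepancy sits inside up to $\tau$ such sums, so there is no evident way to extract an additive bound on $\sum_t\Delta_t$ without over- or under-counting.

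The paper's device is to further subdivide each stationary segment into consecutive blocks of length exactly $\tau$, so that at the right endpoint $t_i$ of a block the sliding window coincides with the block itself. Then for each state $s$ the block's total $\sum_{\ell\in\text{block},\,B_\ell=s}\bigl(\mu(A_\ell,X_\ell,s;\theta_*)-R_\ell\bigr)$ is, up to one final term, equal to $G_{\ell^*}(s)\le\sigma\sqrt{6N_{t_i}(s)\log n}$, where $\ell^*$ is the last round in the block with $B_{\ell^*}=s$ (hence $s\in C_{\ell^*}$); adding the noise deviation on the good event and applying Cauchy--Schwarz over $s\in\Sset$ gives $O(\sqrt{|\Sset|\tau\log n})$ per block, and $\lceil n/\tau\rceil$ blocks give your $n\sqrt{|\Sset|\log n/\tau}$. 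Two minor points: the concentration needs Azuma--Hoeffding rather than plain Hoeffding, since $\indicator{B_\ell=s}$ is adapted to the history and not independent of the reward noise; and your optimized $\tau\asymp(n^2|\Sset|\log n/L^2)^{1/3}$ actually yields the sharper $n^{2/3}(|\Sset|L\log n)^{1/3}$ --- the paper's stated choice of $\tau$ is not quite the minimizer but still delivers the claimed $\mathcal{O}\bigl(n^{2/3}\sqrt{|\Sset|L\log n}\bigr)$.
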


Prior derivations for sliding-window UCB without context achieved a gap-dependent bound of $\tilde{\mathcal{O}}(K \sqrt{n L}/ \Delta^2)$ (see \citealt[Theorem 7]{sw_ucb}) after tuning $\tau$, where $K$ is the number of arms. A gap-free bound can be obtained by bounding $\tilde{\mathcal{O}}(n^{-1 / 6})$ gaps trivially. This yields a $\tilde{\mathcal{O}}(n^{5 / 6})$ regret bound, which is worse than \cref{lem:ucb_regret}.

In practice, the latent state sequence, and hence the number of stationary segments $L$, is often stochastic. Given $\phi_*$, let $p = 1 - \min_{s \in \Sset} P(s \mid s; \phi_*)$ be the maximum probability of a change occurring. We can bound the expected value of $L$ from above by $1 + pn$. This yields the following Bayes regret bound for \mts.

\begin{theorem}
For known model parameters $\theta_*, \phi_*$, let
$p = 1 - \min_{s \in \Sset} P(s \mid s; \phi_*)$
with $\tilde{L} = 1 + p n$. Then, the $n$-round Bayes regret of $\mts$ is
\begin{align*}
    \Bregret(n; \theta_*, \phi_*)
    = \mathcal{O}\left(n^{2/3}\sqrt{|\Sset|\tilde{L}\log n}\right).
\end{align*}
\label{thm:ts_regret}
\vspace{-0.2in}\end{theorem}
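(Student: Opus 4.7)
The plan is to obtain Theorem 1 as an essentially immediate corollary of Lemma 1, via Proposition 1, followed by a simple expectation argument that converts the combinatorial quantity $L$ into $\tilde{L} = 1 + pn$. I would work conditionally on a fixed latent trajectory $s_{1:n}$ first, and only take the expectation over $S_{1:n} \sim \phi_*$ at the very end.

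For the conditional step, fix $\theta_*$ and a sequence $s_{1:n}$. By Proposition 1, the Bayes regret of \mts (viewed conditionally on $s_{1:n}$, which removes the outer expectation over latent trajectories) decomposes as
\begin{align*}
\Regret(n;\theta_*, s_{1:n})
= \E{}{\textstyle\sum_{t=1}^n \mu(A_{t,*},X_t,s_t;\theta_*) - U_t(A_{t,*})}
+ \E{}{\textstyle\sum_{t=1}^n U_t(A_t) - \mu(A_t,X_t,s_t;\theta_*)},
\end{align*}
where $U_t$ is the \mucb upper-confidence bound from \eqref{eq:ucb}. The key observation, following \citet{russo_posterior_sampling} and outlined in Steps 1--3 of Section 4.2, is that both terms are exactly the quantities that arise when bounding the regret of \mucb directly: the first is controlled by showing $s_t \in C_t$ with high probability (Step 1), which makes $U_t(A_{t,*}) \ge \mu(A_{t,*},X_t,s_t;\theta_*)$ whenever the good event holds, and the second is the sum of confidence widths of the pulled arms (Step 2). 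Since the analysis leading to Lemma 1 bounds both quantities uniformly over the trajectory, we obtain
\begin{align*}
\Regret(n;\theta_*, s_{1:n}) = \mathcal{O}\bigl(n^{2/3}\sqrt{|\Sset|\,L(s_{1:n})\log n}\bigr),
\end{align*}
with $L(s_{1:n}) = 1 + \sum_{t=2}^n \indicator{s_t \ne s_{t-1}}$.

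To lift this to $\Bregret(n;\theta_*,\phi_*)$, take the expectation over $S_{1:n} \sim \phi_*$ and apply Jensen's inequality to move the expectation inside the square root,
\begin{align*}
\Bregret(n;\theta_*,\phi_*)
= \E{S_{1:n}}{\Regret(n;\theta_*, S_{1:n})}
= \mathcal{O}\bigl(n^{2/3}\sqrt{|\Sset|\log n}\bigr)\, \E{}{\sqrt{L(S_{1:n})}}
\le \mathcal{O}\bigl(n^{2/3}\sqrt{|\Sset|\,\E{}{L}\log n}\bigr).
\end{align*}
It remains to bound $\E{}{L}$. For each $t \ge 2$, conditional on $S_{t-1} = s$, the probability of a switch is $1 - P(s \mid s;\phi_*) \le p$, so $\E{}{\indicator{S_t \ne S_{t-1}}} \le p$ by the tower rule. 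Summing gives $\E{}{L} \le 1 + p(n-1) \le \tilde{L}$, which substituted above yields the desired bound.

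The main obstacle I anticipate is technical bookkeeping in the first step: the regret decomposition from Proposition 1 and the \mucb-style confidence argument must be carried out carefully when the sequence $s_{1:n}$ is arbitrary (not stochastic) so that the constant in the $\mathcal{O}(\cdot)$ is worst-case in $s_{1:n}$. In particular, the high-probability guarantee $S_t \in C_t$ used in Step 1 relies on concentration of the sliding-window gap $G_t$ under $\sigma^2$-sub-Gaussian rewards and must hold uniformly over $t$ and over latent-state sequences, so the $\sqrt{6 N_t(s)\log n}$ threshold and the $1/n$ tail contribution (which absorbs into the stated bound) need to be tracked. Once that is established and the $\tau$ that minimizes the bound is chosen as in Lemma 1, the Jensen step and the bound on $\E{}{L}$ are routine and deliver $\tilde{L} = 1 + pn$ in the final expression.
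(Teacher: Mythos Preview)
There is a genuine gap in the order of operations. You apply Proposition~1 \emph{after} conditioning on a fixed latent trajectory $s_{1:n}$, but the proposition does not hold at that level. The identity behind the decomposition is $\E{}{U_t(A_{t,*})\mid X_t,\mathcal{H}_t}=\E{}{U_t(A_t)\mid X_t,\mathcal{H}_t}$, and it relies on the fact that $S_t\mid\mathcal{H}_t$ and the sampled belief $B_t\mid\mathcal{H}_t$ share the \emph{same} posterior distribution $P_t$. Once you fix $s_{1:n}$, the conditional law of $S_t$ given $\mathcal{H}_t$ is a point mass at $s_t$, while $B_t$ is still drawn from $P_t$; the two no longer match, and a nonzero cross term $\E{}{\sum_t U_t(A_{t,*})-U_t(A_t)}$ survives. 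So your displayed two-term decomposition for $\Regret(n;\theta_*,s_{1:n})$ of \mts is not valid, and consequently the pointwise conclusion $\Regret(n;\theta_*,s_{1:n})=\mathcal{O}(n^{2/3}\sqrt{|\Sset|L(s_{1:n})\log n})$ for \mts does not follow from Lemma~1.

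The paper fixes this by reversing the order: it applies Proposition~1 directly to $\Bregret(n;\theta_*,\phi_*)$, with the expectation over $S_{1:n}\sim\phi_*$ still in place, obtaining the two terms in \eqref{eqn:posterior_regret_decomposition}. Only then does it invoke the analysis behind Lemma~1, using that the bounds on each of those two terms derived in \cref{sec:ucb_regret_proof} hold for every realization of $S_{1:n}$ (this is the sense in which the \mucb argument is ``worst case''). That yields $\Bregret(n;\theta_*,\phi_*)\le \E{S_{1:n}}{L\tau+2|\Sset|+|\Sset|(n/\tau)+2\sigma\sqrt{6|\Sset|(n/\tau)n\log n}}$, which is linear in $L$ so the expectation replaces $L$ by $\E{}{L}\le\tilde L$; one then optimizes $\tau$ deterministically as a function of $\tilde L$. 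Note also that your version, which plugs in the $L$-dependent optimal $\tau$ before averaging, would make $U_t$ a function of the random $L$ and hence not $(\mathcal{H}_t,X_t)$-measurable; taking the expectation first and optimizing $\tau$ afterward avoids this as well. Your Jensen step and the bound $\E{}{L}\le 1+pn$ are fine once the decomposition is applied at the right level.
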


Note that recent non-stationary bandit algorithms with active change-point detection have $\tilde{\mathcal{O}}(\sqrt{nKL})$ regret bounds \citep{wmd,cao_cpdetect}, where $K$ is the number of arms. However, such change-point detectors do not easily generalize to scenarios with context, and require knowledge of $n, L$ to tune their hyperparameters optimally. Our algorithm \mts handles context and does not require any parameter tuning. \mucb is simply a tool to construct $U_t$ and analyze \mts; better algorithms may exist that yield tighter regret bounds for \mts. Also, while the expected number of stationary segments $\tilde{L} = pn$ appears linear in $n$, all prior works essentially assume $p = \mathcal{O}(1/n)$ by treating the number of stationary segments as a constant. Since changes are rare in many realistic applications, it is safe to assume that $\tilde{L} = \mathcal{O}(n^{\beta})$, for some small $\beta > 0$.

Our next result is for \umts when only a prior over the reward and transitions is known. Our statement changes in two ways: (i) we introduce a high-probability error $\varepsilon$ in estimating the reward via a sample from the prior, and (ii) the expected number of changes $\tilde{L}$ depends on the transition prior. Recall that for any latent state $s$, we assume that the transition model $\phi_{*, s}$ is sampled as $\phi_{*, s} \sim \mathsf{Dir}((\alpha_{s, s'})_{s' \in \Sset})$. We define $\bar{\mu}(a, x, s) = \int_\theta \mu(a, x, s; \theta) P_1(\theta) d \theta$ as the mean conditional reward, marginalized with respect to the prior.

\begin{theorem}
\label{thm:ts_regret_prior} Let $(\alpha_{s, s'})_{s, s' \in \Sset \times \Sset}$ be the prior parameters of $P_1(\phi)$, such that $\phi_* \sim P_1(\phi)$ factors over state $s$ as $\phi_{*, s} \sim \mathsf{Dir}((\alpha_{s, s'})_{s' \in \Sset})$. Let
$
  p = 1 - \min_{s \in \Sset} \alpha_{s, s} / \sum_{s' \in \Sset} \alpha_{s, s'}
$
and $\tilde{L} = 1 + p n$. For $\theta_* \sim P_1(\theta)$, choose $\varepsilon, \delta > 0$ such that
\begin{align*}
  \left\{\forall a \in \Aset, x \in \Xset, s \in \Sset: |\bar{\mu}(a, x, s) - \mu(a, x, s; \theta_*)| \leq \varepsilon\right\}
\end{align*}
holds with probability at least $1-\delta$. Then, the $n$-round Bayes regret of $\umts$ is
\begin{align*}
    \Bregret(n)
    &= \mathcal{O}\left(\delta n + \varepsilon n + n^{2/3}\sqrt{|\Sset|\tilde{L}\log n}\right).
\end{align*}
\vspace{-0.2in}\end{theorem}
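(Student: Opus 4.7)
The plan is to mirror the three-step pipeline of \cref{sec:regret_decomposition} used for \cref{thm:ts_regret}, while absorbing the additional randomness coming from $\theta_* \sim P_1(\theta)$ and $\phi_* \sim P_1(\phi)$. The backbone is again \cref{prop:regret_decomposition}, which applies under its general form \eqref{eq:bayes_regret_2}: since \umts samples $(B_t,\theta)$ from the joint posterior of $(S_t,\theta_*)$, the conditional laws of $A_{t,*}$ and $A_t$ coincide given $(\mathcal{H}_t,X_t)$, and the decomposition into the ``$\mu - U_t$'' and ``$U_t - \mu$'' pieces is valid for \emph{any} $(\mathcal{H}_t,X_t)$-measurable upper index $U_t$. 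I would instantiate $U_t$ through an analogue of \mucb, namely $\umucb$, in which every occurrence of $\mu(\cdot;\theta_*)$ in \cref{alg:sw-ucb} is replaced by the prior-marginalized mean $\bar{\mu}$; this is forced by the fact that $\theta_*$ is unknown to the algorithm.

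\textbf{Steps I would carry out.} Let $\mathcal{E}$ be the event $\{|\bar{\mu}(a,x,s)-\mu(a,x,s;\theta_*)|\le\varepsilon \text{ for all } a,x,s\}$. On $\mathcal{E}^c$, of probability at most $\delta$, I would bound each per-round regret by a constant using boundedness/sub-Gaussianity of the rewards, contributing the $\mathcal{O}(\delta n)$ term. On $\mathcal{E}$, the slack $\varepsilon$ enters the analysis in two places: it shifts the Hoeffding threshold in the consistency test $G_t(s)$ of $\umucb$ by at most $\varepsilon N_t(s)$, and it inflates each per-round confidence width by $\varepsilon$, accumulating to at most $\varepsilon n$ over the horizon. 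With these two adjustments, the argument that proves \cref{lem:ucb_regret} (consistency of the true latent state in $C_t$, then a potential-function bound on the sum of widths over the sliding window of size $\tau$ tuned optimally) goes through and yields an $\mathcal{O}(\varepsilon n + n^{2/3}\sqrt{|\Sset|L\log n})$ bound on the $n$-round regret of $\umucb$ for any fixed $s_{1:n}$ with $L$ segments. \cref{prop:regret_decomposition} then transfers this to the Bayes regret of \umts conditional on $(\theta_*,\phi_*,S_{1:n})$ on $\mathcal{E}$.

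\textbf{Averaging over $\phi_*$ and main obstacle.} It remains to take expectation over $\phi_* \sim P_1(\phi)$ and $S_{1:n}$. Writing $\E{}{L-1}=\sum_{t=2}^n \prob{S_t \neq S_{t-1}}$ and using that, marginally under the Dirichlet prior, the self-transition mean out of state $s$ equals $\alpha_{s,s}/\sum_{s'}\alpha_{s,s'}$, a state-wise bound on the one-step switch probability yields $\E{}{L}\le 1+pn=\tilde{L}$. Combining this with the previous step produces the claimed $\mathcal{O}(\delta n + \varepsilon n + n^{2/3}\sqrt{|\Sset|\tilde{L}\log n})$ bound. The main technical obstacle is threading $\varepsilon$ through the sliding-window UCB argument: one has to verify that the consistency set $C_t$ of $\umucb$ still contains the true latent state with high probability once the reference mean is $\bar{\mu}$ rather than $\mu(\cdot;\theta_*)$, and that the accumulated confidence-width bound picks up only an additive $\varepsilon n$ with no multiplicative blow-up in the $n^{2/3}\sqrt{|\Sset|\tilde{L}\log n}$ term. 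Once this is handled, the remaining work is a routine repetition of the calculations behind \cref{lem:ucb_regret} and \cref{thm:ts_regret}.
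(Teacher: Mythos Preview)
Your proposal is correct and follows essentially the same route as the paper: define $\umucb$ by replacing $\mu(\cdot;\theta_*)$ with $\bar\mu$ (the paper additionally subtracts $\varepsilon$ in the gap \eqref{eqn:ucb_gap}, which is equivalent to your threshold shift), split on the event $\mathcal{E}$ to isolate the $\delta n$ term, show the $\varepsilon$-slack contributes only additive $\mathcal{O}(\varepsilon n)$ in both the consistency and confidence-width parts of the decomposition (this is the paper's \cref{lem:ucb_regret_2}), transfer to \umts via \cref{prop:regret_decomposition}, and finally average $L$ over the Dirichlet prior using $\E{}{L}\le 1+pn$ together with Jensen's inequality on the concave $\sqrt{L}$ dependence.
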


The bound in \cref{thm:ts_regret_prior} has two linear terms in $n$, with $\delta$ and the high-probability error $\varepsilon$. Because the posterior over models is updated online, $\varepsilon$ should decrease as more rounds are observed online, meaning our bound is overly conservative. Nevertheless, some offline model-learning methods, such as tensor decomposition \citep{tensor_decomposition}, yield $\varepsilon = \mathcal{O}(1/\sqrt{n})$ for an offline dataset of size $n$. Thus our bound is not vacuous. We can formally relate $\varepsilon$ and $\delta$ using the tails of the conditional reward distributions. Let $\mu(a, x, s; \theta) - \bar{\mu}(a, x, s)$ be $v^2$-sub-Gaussian for all $a$, $x$, and $s$, where the random quantity is $\theta \sim P_1$. Then for any $\delta > 0$, we have that $\varepsilon = \mathcal{O}(v\sqrt{\log(K|\Xset||\Sset|/\delta}))$ satisfies the conditions on $\varepsilon$ and $\delta$ needed for \cref{thm:ts_regret_prior}.

Among non-stationary contextual bandit algorithms, Exp4.S has near-optimal regret of $\tilde{O}(\sqrt{|\Sset|nL})$ for $|\Sset|$ experts, when $L$ is known, and $\tilde{O}(\sqrt{|\Sset|n}\, L)$, otherwise \citep{nonstationary_contextual_colt}.
Note that the tightness of our Bayes regret bound is limited by the sliding-window algorithm \mucb. Though conceptually simple and able to yield sublinear regret, \mucb likely yields a conservative Bayes regret bound. In addition, because our algorithms naturally leverage the stochasticity of the environment, we significantly outperform near-optimal algorithms, like Exp4.S, empirically. We demonstrate this in \cref{sec:experiments}.

\vspace{-0.05in}
\section{Experiments}
\label{sec:experiments}
\vspace{-0.05in}

In this section, we evaluate our algorithms on both synthetic and real-world datasets. We compare the following methods: (i) \textbf{CD-UCB}: UCB/LinUCB \citep{ucb,linucb} with a change-point detector as in \citet{cao_cpdetect}; (ii) \textbf{CD-TS}: TS/LinTS \citep{lints,lints_2} with the same change-point detector; (iii) \textbf{Exp.S}: \mexp/\mmexp using offline reward model as experts, where each expert takes the best action as measured by its conditional reward model \citep{exp3_s,nonstationary_contextual_colt}; (iv) \textbf{mTS, umTS}: our proposed TS algorithms \mts, \umts. 

In contrast to our method, the first two baselines do not use an offline model, but augment traditional bandit algorithms with a change-point detector that resets the algorithm when a change is detected. When there is no context, \citet{cao_cpdetect} proposed a detector with near-optimal guarantees and state-of-the-art empirical performance. The last baseline modifies adversarial algorithms Exp3/Exp4 by enforcing a lower-bound on the expert weights; this has near-optimal regret in piecewise-stationary bandits \citep{exp3_s}.

\begin{figure*}[t]
\centering
\begin{minipage}{0.33\textwidth}
 \includegraphics[width=0.9\textwidth]{figures/sim_fixed.pdf}
    \caption{Mean and standard error of regret across $100$ runs with fixed change-points.}
    \label{fig:sim_regret}
\end{minipage}
\begin{minipage}{0.66\textwidth}
\begin{minipage}{0.5\textwidth}
 \includegraphics[width=0.9\textwidth]{figures/sim_known.pdf}
 \subcaption{Random latent states, fixed model.}
\end{minipage}
\begin{minipage}{0.5\textwidth}
 \includegraphics[width=0.9\textwidth]{figures/sim_bayes_prior.pdf}
 \subcaption{Random latent states, uncertain model.}
\end{minipage}
\vspace{0.05in}\caption{Mean and standard error of reward across $100$ runs in synthetic scenarios.}
\label{fig:sim_bayes_reward}
\end{minipage}
\vspace{-0.1in}\end{figure*}

\vspace{-0.05in}
\subsection{Synthetic Experiments}
\vspace{-0.05in}

We artificially create a non-stationary multi-armed bandit without context, with $\Aset = [5]$ and $\Sset = [5]$. Mean rewards for are sampled uniformly at random $\mu(a, s) \sim \mathsf{Uniform}(0, 1)$ for each $a \in \Aset, s \in \Sset$. Rewards are drawn i.i.d. from $P(\cdot \mid a, s) = \mathcal{N}(\cdot \mid \mu(a, s), \sigma^2)$ with $\sigma = 0.5$. We use a horizon of $n = 2000$ as a primary application we are concerned with is fast personalization.

For \ucb, \ts, we use a change-point detector that computes the sum of the rewards for each arm in the past $\tau / 2$ rounds, and the $\tau / 2$ rounds before that. If the absolute value of their difference is greater than a threshold $b$, a change is detected. Following \citet{cao_cpdetect}, the window length parameter was tuned to $\tau=100$ to minimize regret, and the threshold is chosen to be $b = \sigma\sqrt{\tau \log(2|\Aset|n^2) / 2}$. 

First, we consider the specific setting where fixed changes between latent states occur exactly every $200$ rounds. We give model-based algorithms \mexp and \mts the true mean rewards. In \cref{fig:sim_regret}, we report the cumulative regret of all methods across $100$ runs. Because we use a short horizon, baseline bandit algorithms that are more sample inefficient and drastically outperformed by our method which leverages a prior model.
Next, we assume random latent state changes according to a transition matrix with $0.0025$ probability of uniformly changing to another latent state, meaning the latent state changes every $200$ rounds in expectation.
In \cref{fig:sim_bayes_reward}, we show the mean reward of each method across $100$ independent runs under two different scenarios: (i) the reward model and transition matrix are known; (ii) mean rewards are drawn from a Gaussian prior $\mathcal{N}(\mu(a, s), \sigma_0^2)$ with $\sigma_0 = 0.2$, and transitions are drawn from a Dirichlet prior with $0.005$ expected probability of transition, i.e. for a state $s$, $\phi_s \sim \mathsf{Dir}((\alpha_{s, s'})_{s' \in \Sset})$ where parameters $(\alpha_s)_{s \in \Sset}$ satisfy $\alpha_{s, s'} = 796$ if $s' = s$ and $1$ otherwise. 

When a prior is given, \mts uses the mean of the prior as if it were the true model parameter, resulting in a performance gap due to model misspecification. We report the average reward in \cref{fig:sim_bayes_reward} because the regret becomes dominated by model error in the given \say{offline} model. In all cases, the model-based Thompson sampling algorithms outperform all baselines by a significant margin, and do not require extensive hyperparameter tuning as the baselines did. Also, as uncertainty is introduced in the offline model, \mts performs worse than \umts, which accounts for model uncertainty.

\vspace{-0.05in}
\subsection{MovieLens Experiments}
\vspace{-0.05in}

We also assess the performance of our algorithms on the MovieLens 1M dataset \citep{movielens}, a popular collaborative filtering dataset, where $6040$ users rate $3883$ movies. Each movie has a set of genres. We filter the dataset to include only users who rated at least $200$ movies and movies rated by at least $200$ users. This results in $1353$ users and $1124$ movies.
We randomly select $50\%$ of all ratings as our \say{offline} training set, and use the remaining $50\%$ as a test set, giving sparse ratings matrices $M_{\text{train}}$ and $M_{\text{test}}$. We complete each matrix using least-squares matrix completion \citep{pmf} with rank $20$ to yield a low prediction error without overfitting. The learned training (test) factors are $M_{\text{train}} = \hat{U} \hat{V}^\top$ ($M_{\text{test}} = U V^\top$). In the training (test) set, user $i$ and movie $j$ correspond to the rows in the corresponding matrix, $\hat{U}_i$ ($U_i$) and $\hat{V}_i$ ($V_j$).

We define a non-stationary latent contextual bandit instance with $\Aset = [20]$ and $\Sset = [5]$ as follows. We use $k$-means clustering on the rows of $U$ to cluster users into $5$ clusters, where $5$ is the largest value that yields evenly-sized clusters. Motivated by prior work \citep{nonstationary_contextual_sigir}, we create a \say{superuser} by randomly sampling $5$ users $i_1, \dots, i_5$, one from each cluster; for latent state $s$, the superuser behaves according to the user $i_s$. Note that different superusers will have a different set of behavior modes, which is often true in practice. The transition matrix that governs the dynamics of the superuser is given by the linear combination
$
  P(s' \mid s; \phi_*) = 0.9 J(s, s') + 0.1 K(s, s'),
$
where $J(s, s') = 1 - p$ if $s' = s$ and $p / (|\Sset| - 1)$ otherwise, and $K(s, s') \propto \exp(||U_{i_{s'}} - U_{i_s}||_2^2)$. Here $J$ is used to ensure changes are infrequent, and $K$ to make transitions to similar latent states more likely. We let $p = 0.9975$ so that changes occur roughly every $400$ rounds with $n=2000$.

A run of a non-stationary contextual bandit proceeds as follows. A superuser $i_1, \hdots, i_5$ is sampled at random as described above. In each round, a latent state $S_t$ is generated according to $S_{t-1}$ and the transition matrix. Then, $20$ genres, then a movie for each genre, are both uniformly sampled from the set of all genres, movies, respectively, creating a set of diverse movies. Context $X_t \in \mathbb{R}^{20 \times 20}$ is a matrix where the rows are the training feature vectors of the sampled movies, that is movie $j$ has a vector $\hat{V}_j$. The agent chooses among movies in $X_t$. The reward for recommending movie $j$ to the superuser under state $S_t = s$ is drawn from $R_t \sim \mathcal{N}(U_{i_s}^\top V_j, 0.25)$, the product of the test user and movie vectors as its mean. Note that both $U$ and $V$ are unknown to the learning agent.

Our baselines \linucb, \lints are given movie vectors from the training set as context, and need to only learn the user vector. We could not find prior work that performed change detection in linear bandits, so we propose an adaption of the one by \citet{cao_cpdetect} to the linear case. Specifically, for round $t$ and window size $\tau$, the detector computes the least-squares solution $\hat{W}_{t}$ with features $X_{t, A_t}$ and rewards $R_t$ for the past $\tau / 2$ rounds of data, and $\hat{W}_{t}'$ for the $\tau / 2$ rounds before that. Let $\hat{\Sigma}_t = \sum_{t - \tau}^t X_{t, A_t}^\top X_{t, A_t}$ be the empirical covariance matrix. The detector fires when $||\hat{W}_{t} - \hat{W}_t'||_{\hat{\Sigma}_t} \geq b$; for matrix $M$ and weights $\Sigma$, the weighted norm is given by $||M||_{\Sigma} = \sqrt{M^T \Sigma M}$. Here both $\tau$ and $b$ are tuned $\tau = 100$ and $b = 13$ by maximizing reward during evaluation.

We learn a model \say{offline} in the same way as the true model is constructed, except using the training set. Our offline model consists of $5$ clusters of users derived from $k$-means clustering on users $\hat{U}$ in the training set. For each latent state, the prior given to our algorithms is a Gaussian prior with the corresponding cluster's mean and covariance.
Similarly, we estimate a transition matrix $\hat{\phi}$ using the same process as $\phi_*$ but using the cluster means on the training set instead of user features on the test set. We give a Dirichlet prior with parameters $(\alpha_{s, s'})_{s, s' \in \Sset \times \Sset}$ where $\alpha_{s, s'} = 800\,P(s' \mid s; \hat{\phi})$.

We evaluate on $100$ superusers, and show the mean reward in \cref{fig:movielens_reward}.
Again, the model-based algorithms outperform finely-tuned baselines by a significant margin, especially in the short horizon. Since the offline model is misspecified due to the train-test split, \umts improves upon \mts in the long term, as it refines its model parameters online. 

\begin{figure}
    \centering
    \includegraphics[width=0.3\textwidth]{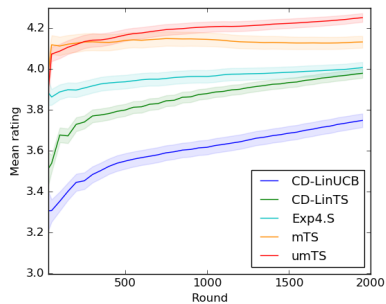}
    \caption{Mean and standard error of reward across 100 ``superusers" in MovieLens 1M.}
    \label{fig:movielens_reward}
\vspace{-0.1in}\end{figure}

\vspace{-0.05in}
\section{Related Work}
\vspace{-0.05in}
\label{sec:related work}

\paragraph{Non-stationary Bandits.} This topic has been studied extensively \citep{d_ucb,sw_ucb,exp3_s}. First works adapted to changes passively by weighting rewards, either by exponential discounting \citep{d_ucb} or by considering recent rewards in a sliding window~\citep{sw_ucb}. The latter yields a $\tilde{\mathcal{O}}(K\sqrt{nL}/\Delta^2)$ gap-dependent bound when $L$ is known. In the adversarial setting \citep{exp3_s,SHIFTBAND}, adaptation can be achieved by bounding the weights of experts from below. This leads to $\tilde{\mathcal{O}}(\sqrt{n|\Sset|L})$ gap-free switching regret, where $|\Sset|$ is the number of experts. \citet{Rexp3} periodically reset a base bandit algorithm and attain $\tilde{\mathcal{O}}(n^{2/3}V_T^{1/3})$ regret, where $V_T$ is the total variation under smooth changes. Other works monitor reward distributions and reset the bandit algorithm when a change is detected ~\citep{wmd,CUSUM}. \citet{cpbayesian} proposed augmenting Thompson sampling with a Bayesian change-point detector, but provide no regret guarantee. \citet{cao_cpdetect} proposed a simple near-optimal change-point detector that yields $\tilde{\mathcal{O}}(\sqrt{nKL})$ regret. In linear bandits, several recent paper studied passive adaptation of UCB algorithms \citep{nonstationary_linucb_aistats19,nonstationary_linucb_neurips19,nonstationary_linucb_aistats20}. This yields $\tilde{\mathcal{O}}(n^{2/3}P_T^{1/3})$ regret, where $P_T$ measures the total variation in an unknown weight vector. \citet{nonstationary_contextual_colt} provided several contextual algorithms with similar regret to ours, with the best algorithm matching the Exp4.S bound of $\tilde{\mathcal{O}}(\sqrt{n|\Sset|L})$. All above methods forget the past, discount it, or are adversarial. This is a major drawback when the environment changes in a structured manner.

\paragraph{Latent Bandits.} Our work is also related to latent bandits \citep{latent_bandits,latent_contextual_bandits}. Here the latent state is fixed across rounds and algorithms compete with standard bandit strategies, such as UCB ~\citep{ucb,linucb} or Thompson sampling \citep{lints,lints_2}. \citet{latent_bandits} derived UCB algorithms in the multi-armed case without context under the extremes when the mean conditional rewards are either known or need to be estimated completely online. \citet{latent_contextual_bandits} extended it to contextual bandits where policies are learned offline and selected online using Exp4. Bayesian policy reuse (BPR) \citep{bpr} selects offline-learned policies by maintaining a belief over the optimality of each policy, but no regret analysis exists. Recently, \citet{latent_bandits_revisited} proposed and analyzed TS algorithms with complex offline-learned models. Our work is the first to extend latent bandits to non-stationary environments by considering a latent state that evolves according to a transition model, which is known or sampled from a known prior.

\vspace{-0.05in}
\section{Conclusions}
\label{sec:conclusions}
\vspace{-0.05in}

We study non-stationary latent bandits, where the conditional rewards depend on an evolving discrete latent state. Given the plethora of rich offline models, we consider a setting where an offline-learned model can be used naturally by Thompson sampling to identify the latent state online. Prior algorithms for non-stationary bandits adapt by forgetting the past, discounting it, or are adversarial. We avoid this by leveraging the stochastic latent structure of our problem and thus can outperform prior works empirically by a large margin. Our approach is contextual, aware of uncertainty, and we analyze it by a reduction to a sliding-window UCB algorithm. Though our analysis is conservative, our work can be viewed as a stepping stone for analyzing the Bayes regret of Thompson sampling in more complex graphical models than a single fixed latent state \citep{latent_bandits,latent_contextual_bandits,latent_bandits_revisited}.

\bibliographystyle{plainnat}
\bibliography{references}

\clearpage
\onecolumn
\appendix

\section{Proofs}

Our proofs rely on the following concentration inequality, which is a straightforward extension of the Azuma-Hoeffding inequality to sub-Gaussian random variables. This was used and proved by \citet{latent_bandits_revisited}.

\begin{proposition}
\label{thm:azuma_general}
Let $(Y_t)_{t \in [n]}$ be a martingale difference sequence with respect to filtration $(\mathcal{F}_t)_{t \in [n]}$, that is $\E{}{Y_t \mid \mathcal{F}_{t - 1}} = 0$ for any $t \in [n]$. Let $Y_t \mid \mathcal{F}_{t - 1}$ be $\sigma^2$-sub-Gaussian for any $t \in [n]$. Then for any $\varepsilon > 0$, 
\begin{align*}
  \prob{\Big|\sum_{t = 1}^n Y_t\Big| \geq \varepsilon}
  \leq 2 \exp\left[- \frac{\varepsilon^2}{2 n \sigma^2}\right]\,.
\end{align*}
\end{proposition}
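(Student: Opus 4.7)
The plan is a textbook Chernoff/MGF argument, modified only to handle the conditional sub-Gaussianity along the filtration. First I would apply Markov's inequality to the exponential: for any $\lambda > 0$,
\begin{align*}
\prob{\sum_{t=1}^{n} Y_t \geq \varepsilon}
\leq e^{-\lambda \varepsilon}\, \E{}{\exp\Bigl(\lambda \sum_{t=1}^{n} Y_t\Bigr)}.
\end{align*}
The core step is then to bound the joint moment generating function by peeling off one term at a time using the tower property of conditional expectation.

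Second, I would write
\begin{align*}
\E{}{\exp\Bigl(\lambda \sum_{t=1}^{n} Y_t\Bigr)}
= \E{}{\exp\Bigl(\lambda \sum_{t=1}^{n-1} Y_t\Bigr) \E{}{\exp(\lambda Y_n) \mid \mathcal{F}_{n-1}}},
\end{align*}
since $\sum_{t=1}^{n-1} Y_t$ is $\mathcal{F}_{n-1}$-measurable. The conditional sub-Gaussianity hypothesis (together with the martingale difference property, which ensures the centering in the sub-Gaussian definition matches the zero conditional mean of $Y_t$) gives $\E{}{\exp(\lambda Y_n) \mid \mathcal{F}_{n-1}} \leq \exp(\sigma^2 \lambda^2/2)$ almost surely. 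Iterating this peeling $n$ times yields
\begin{align*}
\E{}{\exp\Bigl(\lambda \sum_{t=1}^{n} Y_t\Bigr)} \leq \exp\!\left(\frac{n \sigma^2 \lambda^2}{2}\right).
\end{align*}

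Third, I would substitute this back into the Markov bound to obtain $\prob{\sum_{t=1}^{n} Y_t \geq \varepsilon} \leq \exp(-\lambda \varepsilon + n \sigma^2 \lambda^2 / 2)$ and optimize by choosing $\lambda = \varepsilon / (n \sigma^2)$, which yields the one-sided tail $\exp(-\varepsilon^2/(2 n \sigma^2))$. The lower tail follows by applying exactly the same argument to $(-Y_t)_{t \in [n]}$, which is also a martingale difference sequence whose conditional distribution is $\sigma^2$-sub-Gaussian (sub-Gaussianity is symmetric in the sign of the variable given that the conditional mean is zero). A union bound over the two tails produces the factor of $2$ in the stated inequality.

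The only subtlety, and what I would be most careful about, is the interplay between the martingale-difference property and the definition of conditional sub-Gaussianity: the statement says ``$Y_t \mid \mathcal{F}_{t-1}$ is $\sigma^2$-sub-Gaussian'', which I will interpret as $\E{}{\exp(\lambda (Y_t - \E{}{Y_t \mid \mathcal{F}_{t-1}})) \mid \mathcal{F}_{t-1}} \leq \exp(\sigma^2 \lambda^2/2)$; the martingale assumption $\E{}{Y_t \mid \mathcal{F}_{t-1}} = 0$ then lets me drop the centering and use $\E{}{\exp(\lambda Y_t) \mid \mathcal{F}_{t-1}} \leq \exp(\sigma^2 \lambda^2/2)$ directly inside the tower expansion. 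No other technical obstacle arises: the argument requires no independence, only conditional control of each increment's MGF.
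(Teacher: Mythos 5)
Your proof is correct and is the standard Chernoff/tower-property argument for conditionally sub-Gaussian martingale difference sequences: Markov's inequality on the exponential, peeling one increment at a time via the tower rule to bound the joint MGF by $\exp(n\sigma^2\lambda^2/2)$, optimizing $\lambda = \varepsilon/(n\sigma^2)$, and a union bound over the two tails. The paper does not actually include its own proof of this proposition (it defers to the cited prior work), and the argument used there is exactly the one you outline, so there is nothing to add.
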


\subsection{Proof of \cref{lem:ucb_regret}}
\label{sec:ucb_regret_proof}

Recall that we have the following fixed quantities: true reward parameters $\theta_*$, latent state sequence $s_{1:n}$, and the number of stationary segments $L$. Note that we can decompose the regret as
\begin{align}
\begin{split}
  \Regret(n; \theta_*, s_{1:n})
  & = \E{}{\sum_{t=1}^n \left(\mu(A_{t, *}, X_t, s_t; \theta_*) - \mu(A_t, X_t, s_t; \theta_*)\right)} \\
  & =  \E{}{\sum_{t=1}^n \left(\mu(A_{t, *}, X_t, s_t; \theta_*) - U_t(A_t)\right)} 
  + \E{}{\sum_{t=1}^n \left(U_t(A_t) - \mu(A_t, X_t, S_t; \theta_*)\right)} \\
  & \leq \E{}{\sum_{t=1}^n \left(\mu(A_{t, *}, X_t, s_t; \theta_*) - U_t(A_{t, *})\right)} 
  + \E{}{\sum_{t=1}^n \left(U_t(A_t) - \mu(A_t, X_t, S_t; \theta_*)\right)}\,.
\end{split}
\label{eqn:ucb_regret_decomposition}
\end{align}
This is because for all rounds $t \in [n]$, we choose $A_t = \arg\max_{a \in \Aset} U_t(a)$, which means $U_t(A_t) \geq U_t(A_{t, *})$.

Let $\mathcal{T}$ be a set of all rounds $t$ that are not close to any change-point, that is $s_\ell = s_t$ for all $\ell \in \{t - \tau+1, \hdots, t\}$. Note that this includes all rounds where the last $\tau$ rounds have the same latent state as that round. 
Let
\begin{align}
    E_t = \left\{
    \forall s \in \Sset: \, 
    \abs{\sum_{\ell = \min\{1, t-\tau\}}^{t-1} 
    \indicator{B_\ell = s} \left(\mu(A_\ell, X_\ell, s_\ell; \theta_*) - R_\ell\right)} \leq \sigma \sqrt{6 N_t(s) \log n} \right\}
    \label{eqn:sw_ucb_event}
\end{align} 
be the event that the total realized reward under each played latent state is close to its expectation. Let $E = \cap_{t \in \mathcal{T}} E_t$ be the event that this holds for all rounds not close to a change-point, and $\bar{E}$ be its complement. Then we can bound the expected $n$-round regret as
\begin{align}
\begin{split}
\label{eqn:sw_regret_event_decomposition}
    &\Regret(n; \theta_*, s_{1:n}) \\
    &\,\leq  L\tau + \E{}{\sum_{t \in \mathcal{T}} \left(\mu(A_{t, *}, X_t, s_t; \theta_*) - \mu(A_t, X_t, s_t; \theta_*)\right)}  \\
    &\,=  L\tau + \E{}{\indicator{\bar{E}}\sum_{t \in \mathcal{T}} \left(\mu(A_{t, *}, X_t, s_t; \theta_*) - \mu(A_t, X_t, s_t; \theta_*)\right)}
    +
    \E{}{\indicator{E}\sum_{t \in \mathcal{T}} \left(\mu(A_{t, *}, X_t, s_t; \theta_*) - \mu(A_t, X_t, s_t; \theta_*)\right)}\\
    &\,\leq  L\tau + \E{}{\indicator{\bar{E}} \sum_{t \in \mathcal{T}} \left(\mu(A_{t, *}, X_t, s_t; \theta_*) - \mu(A_t, X_t, s_t; \theta_*)\right)}  \\
    &\,\qquad + 
    \E{}{\indicator{E}\sum_{t \in \mathcal{T}} \left(\mu(A_{t, *}, X_t, s_t; \theta_*) - U_t(A_{t, *})\right)} + \E{}{\indicator{E}\sum_{t \in \mathcal{T}} \left(U_t(A_t) - \mu(A_t, X_t, s_t; \theta_*)\right)}\,,
\end{split}
\end{align}
where for the first inequality we upper bound the regret in rounds close to change-points by $L \tau$, and in the second we use the regret decomposition in \eqref{eqn:ucb_regret_decomposition}. We ignore the rounds within $\tau$ rounds of change-points because the empirical mean reward estimates over the those rounds are biased.

We first show that the probability of $\bar{E}$ occurring is low. Without context, this would follow immediately from Hoeffding's inequality. Since we have context generated by some random process, we instead turn to martingales. 

\begin{proposition}
\label{thm:ucb_concentration} Let $E_t$ be defined as in \eqref{eqn:sw_ucb_event} for all rounds $t$, $E = \cap_{t \in \mathcal{T}} E_t$, and $\bar{E}$ be its complement. Then $\prob{\bar{E}} \leq 2|\Sset|n^{-1}$.
\end{proposition}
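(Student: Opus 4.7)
The plan is to prove the bound by (i) fixing $s \in \Sset$ and $t \in \mathcal{T}$, (ii) reinterpreting the inner sum in the definition of $E_t$ as a sum over a random subset of rounds of sub-Gaussian martingale differences, and (iii) combining a data-dependent Azuma-style inequality with a union bound over $s$, $t$, and the possible sizes of the random subset. Throughout, $s_{1:n}$ and $\theta_*$ are treated as fixed, so the only randomness comes from the contexts $X_\ell$, the algorithm's belief samples $B_\ell$ (and hence actions $A_\ell$), and the rewards $R_\ell$.

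First, I would define, for each $s$ and $t \in \mathcal{T}$, the filtration $\mathcal{F}_\ell$ containing everything up to and including $X_\ell, B_\ell, A_\ell$ but excluding $R_\ell$. Because $\indicator{B_\ell = s}$ is $\mathcal{F}_\ell$-measurable, and because $R_\ell \mid \mathcal{F}_\ell$ has mean $\mu(A_\ell, X_\ell, s_\ell; \theta_*)$ and is $\sigma^2$-sub-Gaussian by the reward assumption, each summand
\begin{align*}
  Y_\ell \;=\; \indicator{B_\ell = s}\bigl(\mu(A_\ell, X_\ell, s_\ell; \theta_*) - R_\ell\bigr)
\end{align*}
is a martingale difference that is $\sigma^2$-sub-Gaussian on the event $\{B_\ell = s\}$ and identically zero otherwise. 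The obstacle is that only $N_t(s)$ of these summands are nonzero, and $N_t(s)$ is itself random, so a direct application of Proposition \ref{thm:azuma_general} with a fixed number of terms would yield $\sigma\sqrt{6 \tau \log n}$ rather than the tighter $\sigma\sqrt{6 N_t(s) \log n}$ required in the definition of $E_t$.

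To handle the random count, I would introduce the (random) stopping times $T_1 < T_2 < \cdots < T_{N_t(s)}$ enumerating those $\ell$ in the window with $B_\ell = s$, and set $Z_k = \mu(A_{T_k}, X_{T_k}, s_{T_k}; \theta_*) - R_{T_k}$. Then $\sum_\ell Y_\ell = \sum_{k=1}^{N_t(s)} Z_k$, and $(Z_k)$ is a $\sigma^2$-sub-Gaussian martingale difference sequence with respect to the stopped filtration $(\mathcal{F}_{T_k})$. By a union bound over the possible values $m \in \{1, \dots, \tau\}$ of $N_t(s)$,
\begin{align*}
  \prob{\abs{\textstyle\sum_\ell Y_\ell} \geq \sigma\sqrt{6 N_t(s)\log n}}
  \;\leq\; \sum_{m=1}^{\tau} \prob{\abs{\textstyle\sum_{k=1}^{m} Z_k} \geq \sigma\sqrt{6m\log n}},
\end{align*}
and each term on the right is at most $2\exp(-3\log n) = 2 n^{-3}$ by Proposition \ref{thm:azuma_general} applied to the first $m$ terms of $(Z_k)$. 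This gives $\prob{\bar E_t \cap \{s\text{-slice fails}\}} \leq 2\tau n^{-3}$.

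Finally, union-bounding over $s \in \Sset$ and over $t \in \mathcal{T}$ (with $|\mathcal{T}| \leq n$ and $\tau \leq n$) yields
\begin{align*}
  \prob{\bar E} \;\leq\; |\mathcal{T}|\cdot|\Sset|\cdot 2\tau n^{-3} \;\leq\; 2|\Sset|\,n^{-1},
\end{align*}
as claimed. The only subtle point — and the main thing to get right carefully in a full write-up — is the definition of the stopped filtration and the verification that the $Z_k$'s remain $\sigma^2$-sub-Gaussian martingale differences after the time change; the rest is a standard Chernoff/union-bound calculation.
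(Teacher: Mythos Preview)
Your proposal is correct and follows essentially the same route as the paper: fix $s$ and $t$, recognize the summands as a $\sigma^2$-sub-Gaussian martingale difference sequence, union bound over the possible values $m \in \{1,\dots,\tau\}$ of the random count $N_t(s)$, apply the Azuma-type bound of Proposition~\ref{thm:azuma_general} to get $2n^{-3}$ per term, and then union bound over $s \in \Sset$ and $t \in \mathcal{T}$. Your explicit introduction of stopping times $T_k$ and the stopped filtration is in fact a cleaner justification of the step the paper writes informally as ``fix $|\mathcal{T}_{t,s}| = N_t(s) = u$''; both arguments are the same optional-skipping idea, and you have correctly identified it as the only point requiring care.
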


\begin{proof}
Because the UCBs depend on which latent states are eliminated, the UCBs depend on the history, and the conditional action given observed context also depends on the history. 
For each latent state $s$ and round $t$, let $\mathcal{T}_{t, s}$ be the rounds where state $s$ was chosen among the past $\tau$ rounds. For round $\ell \in \mathcal{T}_{t, s}$, let $Y_\ell(s) = \mu(A_\ell, X_\ell, s_\ell;\theta_*) - R_\ell$. Observe that $Y_\ell(s) \mid X_\ell, H_\ell$ is $\sigma^2$-sub-Gaussian. This implies that $(Y_\ell(s))_{\ell \in \mathcal{T}_{t, s}}$ is a martingale difference sequence with respect to context and history $(X_\ell, H_\ell)_{\ell \in \mathcal{T}_{t, s}}$, or $\E{}{Y_\ell(s) \mid X_\ell, H_\ell} = 0$ for all rounds $\ell \in \mathcal{T}_{t, s}$.

For any round $t$, and state $s \in \Sset$, we have that $\mathcal{T}_{t, s}$ is a random quantity. First, we fix $|\mathcal{T}_{t, s}| = N_t(s) = u$ where $u \leq \tau$ and yield the following due to \cref{thm:azuma_general}, 
\begin{align*}
  \prob{\abs{\sum_{\ell \in \mathcal{T}_{t, s}} Y_\ell(s)} \geq  \sigma \sqrt{6 u \log n}}
  \leq 2\exp\left[-3\log n\right]
  = 2 n^{-3}\,.
\end{align*}
So, by the union bound, we have
\begin{align*}
  \prob{\bar{E}}
  \leq \sum_{t \in \mathcal{T}} \sum_{s \in \Sset} \sum_{u = 1}^{\tau}
  \prob{\abs{\sum_{\ell \in \mathcal{T}_{t, s}} Y_\ell(s)} \geq  \sigma \sqrt{6 u \log n}}
  \leq 2 |\Sset| n^{-1}\,.
\end{align*}
This concludes the proof.
\end{proof}

We can show that the second term in \eqref{eqn:sw_regret_event_decomposition} is small because the probability of $\bar{E}$ is small. 
Specifically, from \cref{thm:ucb_concentration}, and that total regret is bounded by $n$, we have that the second term in \eqref{eqn:sw_regret_event_decomposition} is bounded by $n\prob{\bar{E}} \leq 2|\Sset|$.

Next, we bound the third term in \eqref{eqn:sw_regret_event_decomposition}. For round $t \in \mathcal{T}$, the event $\mu(A_{t, *}, X_t, s_t; \theta_*) > U_t(A_{t, *})$ occurs only if $s_t \notin C_t$ also occurs. By the design of $C_t$ in \mucb, this happens only if $G_t(s_t) > \sigma\sqrt{6 N_t(s)\log n}$. Event $E_t$ says that the opposite is true for all states, including true state $s_t$. So the third term in \eqref{eqn:sw_regret_event_decomposition} is at most $0$. 

Now we consider the last term in \eqref{eqn:sw_regret_event_decomposition}. We know that $\mathcal{T}$ is composed of $L$ stationary segments. We bound the last term for each segment individually as follows.

\begin{proposition}
\label{thm:ucb_interval_width} Let $\mathcal{I} \subseteq \mathcal{T}$ be a stationary segment containing $m$ rounds. Then
\begin{align*}
    \E{}{\indicator{E} \sum_{t \in \mathcal{I}} \left(U_t(A_t) - \mu(A_t, X_t, s_t; \theta_*)\right)}
    \leq |\Sset|\lceil m /\tau \rceil + 2\sigma\sqrt{6|\Sset|\lceil m / \tau \rceil m\log{n}}.
\end{align*}
\vspace{-0.1in}\end{proposition}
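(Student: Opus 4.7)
The plan is to exploit a blocking argument inside $\mathcal{I}$, combined with the observation that under event $E$ every instantaneous gap on $\mathcal{I}$ is non-negative. Let $s$ denote the constant true latent state on $\mathcal{I}$. Applying event $E$ to state $s$ itself gives $|G_t(s)| \leq \sigma\sqrt{6 N_t(s)\log n}$, so $s \in C_t$ for every $t \in \mathcal{I}$; since \mucb picks $(B_t, A_t)$ as the joint argmax over $C_t \times \Aset$, this forces $\mu(A_t, X_t, B_t; \theta_*) \geq \mu(A_t, X_t, s; \theta_*)$ throughout $\mathcal{I}$. Adding the $E$-concentration of rewards around $\mu(\cdot, \cdot, s; \theta_*)$ to the consistency inequality $G_t(s') \leq \sigma\sqrt{6 N_t(s')\log n}$ available for every $s' \in C_t$, the rewards $R_\ell$ cancel and one obtains the ``past-window'' budget
\begin{align*}
  \sum_{\ell = t - \tau}^{t - 1} \indicator{B_\ell = s'}\bigl(\mu(A_\ell, X_\ell, s'; \theta_*) - \mu(A_\ell, X_\ell, s; \theta_*)\bigr) \leq 2\sigma\sqrt{6 N_t(s')\log n},
\end{align*}
valid at any $t \in \mathcal{I}$ and $s' \in C_t$ (in particular $s' = B_t$). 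This is the only place event $E$ enters.

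Next, I would partition $\mathcal{I}$ into $J = \lceil m/\tau \rceil$ consecutive blocks of length at most $\tau$. Within block $\mathcal{I}_j$, for each state $s'$ that appears as some $B_t$ in that block, let $t^*_{j, s'}$ be the last such round. Because the block has length at most $\tau$, every earlier round of $\mathcal{I}_j$ at which $s'$ was chosen lies in the sliding window ending at $t^*_{j, s'}$, so applying the past-window bound at $t^*_{j, s'}$ controls the sum of all in-block contributions except the last. Any contributions from the part of that window preceding $\mathcal{I}_j$ lie in $\mathcal{I}_{j-1} \subseteq \mathcal{I}$ where the gaps are non-negative by the first step, so dropping them only tightens the inequality. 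Bounding the single remaining ``last'' term by a constant under the usual $O(1)$ boundedness convention on mean rewards gives
\begin{align*}
  W_{j, s'} := \sum_{t \in \mathcal{I}_j,\, B_t = s'} \bigl(\mu(A_t, X_t, s'; \theta_*) - \mu(A_t, X_t, s; \theta_*)\bigr) \leq 1 + 2\sigma\sqrt{6 N_{t^*_{j, s'}}(s')\log n}.
\end{align*}

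The final step is to sum over $s'$ within each block via Cauchy--Schwarz and then over blocks. The geometric fact I would use is that the windows $[t^*_{j, s'} - \tau, t^*_{j, s'} - 1]$ for varying $s'$ all sit inside a strip of length at most $2\tau$, and since each round $\ell$ carries a single $B_\ell$, this forces $\sum_{s'} N_{t^*_{j, s'}}(s') = O(\tau)$; Cauchy--Schwarz then yields $\sum_{s'}\sqrt{N_{t^*_{j, s'}}(s')} = O(\sqrt{|\Sset|\tau})$. Summing the per-block bound over the $J$ blocks and using $J\tau \leq m + \tau$ to convert a $J^2 \tau$ factor into $Jm$ collapses the double sum to $O(|\Sset| J) + O(\sigma\sqrt{|\Sset| J m \log n})$, which matches the advertised $|\Sset|\lceil m/\tau\rceil + 2\sigma\sqrt{6|\Sset|\lceil m/\tau\rceil m\log n}$ up to absolute constants.

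The hard part is obtaining the $|\Sset|$ rather than $|\Sset|^2$ dependence inside the square root: the naive state-wise bound $N_{t^*_{j, s'}}(s') \leq \tau$ together with $\sum_{s'}\sqrt{\tau} = |\Sset|\sqrt{\tau}$ would leave a spurious $|\Sset|^2$ after summing blocks, so the sharper observation that the various per-state reference windows jointly cover only $O(\tau)$ rounds (because each round has only one $B_\ell$) is essential. A secondary technical point is the first-block pre-window tail, which may reach rounds that lie in the same stationary segment of $s_{1:n}$ but outside $\mathcal{T}$, where event $E$ is not guaranteed and the non-negativity argument does not apply directly; this is absorbed either into a slightly larger $\tau$-buffer in the definition of $\mathcal{T}$ or into a constant-order correction per state, both consistent with the $|\Sset|J$ slack that already appears in the bound.
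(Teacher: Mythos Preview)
Your proof is correct and follows essentially the same approach as the paper: partition $\mathcal{I}$ into $\lceil m/\tau\rceil$ blocks of length at most $\tau$, combine the consistency inequality $G_t(s')\le\sigma\sqrt{6N_t(s')\log n}$ for each played state with the concentration from event $E$ to obtain a per-state, per-block budget (plus one uncontrolled ``last'' round), and finish with Cauchy--Schwarz. The paper pivots on the realized reward $R_\ell$ instead of the true mean $\mu(\cdot,\cdot,s;\theta_*)$, but after cancellation the two arguments coincide; your explicit non-negativity observation and handling of the pre-block window tail are refinements that the paper's version leaves implicit.
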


\begin{proof}
To ease exposition, let the $m$ rounds in $\mathcal{I}$ be denoted $1, \hdots, m$. We can further divide $\mathcal{I}$ into intervals of length $\tau$ and the last with length of at most $\tau$. Let $1 = t_0 \leq t_1 \leq \hdots  \leq t_{\lceil m / \tau \rceil} = m$ partition $\mathcal{I}$ into such intervals. We can write,
\begin{align*}
    &\E{}{\indicator{E} \sum_{t \in \mathcal{I}} \left(U_t(A_t) - \mu(A_t, X_t, s_t; \theta_*)\right)} \\
    &\quad=
    \E{}{\indicator{E} \sum_{i = 1}^{\lceil m / \tau \rceil} \sum_{\ell = t_{i-1}}^{t_i} \left(\mu(A_\ell, X_\ell, B_\ell; \theta_*) - R_\ell\right)}
    + \E{}{\indicator{E} \sum_{i = 1}^{\lceil m / \tau \rceil} \sum_{\ell = t_{i-1}}^{t_i} \left(R_\ell - \mu(A_\ell, X_\ell, s_\ell; \theta_*)\right)} \\
    &\quad\leq 
    \E{}{\sum_{i = 1}^{\lceil m / \tau \rceil}\sum_{s \in S} (G_{t_i}(s) + 1)} 
    + \sum_{i = 1}^{\lceil m / \tau \rceil}\sum_{s \in S}  \sigma\sqrt{6N_{t_i}(s)\log n} \\
    &\quad\leq |\Sset|\lceil m / \tau \rceil + \sum_{s \in S} \sum_{i = 1}^{\lceil m / \tau \rceil} 2\sigma \sqrt{6N_{t_i}(s) \log n}.
\end{align*}
For each window $i$ of length $\tau$ and latent state $s$, we use that until the last round before $t_i$ where $s$ is selected, we have an upper bound on the total prediction error, given by the upper bound on the gap $G_{t_i}(s) \leq \sigma\sqrt{6N_{t_i}(s)\log n}$, where $G_{t_i}(s)$ is defined as in \eqref{eqn:ucb_gap} Recall that $E_{t_i}$, as defined in \eqref{eqn:sw_ucb_event}, occurring implies that the deviation of the realized reward from the true means bounded by $\sigma\sqrt{6N_{t_i}(s)\log n}$. Accounting for the last round where $s$ was chosen in window $i$ yields the right-hand side of the inequality. Applying the Cauchy-Schwarz inequality yields,
\begin{align*}
    \E{}{\indicator{E} \sum_{t \in \mathcal{I}} U_t(A_t) - \mu(A_t, X_t, s_t; \theta_*)}
    \leq |\Sset|\lceil m /\tau \rceil + 2\sigma\sqrt{6|\Sset|\lceil m / \tau \rceil m\log{n}},
\end{align*}
which is the desired upper bound.
\end{proof}

Now we can bound the last term in \eqref{eqn:sw_regret_event_decomposition} by combining \cref{thm:ucb_interval_width} across all $L$ stationary segments. Let $(\mathcal{I}_i)_{i \in [L]}$ denote the stationary segments, and segment $\mathcal{I}_i$ have length $m_i$. We have,
\begin{align*}
    \E{}{\indicator{E} \sum_{t \in \mathcal{T}} \left(U_t(A_t) - \mu(A_t, X_t, s_t; \theta_*)\right)}
    &= \E{}{\indicator{E} \sum_{i=1}^L\sum_{t \in \mathcal{I}_i} \left(U_t(A_t) - \mu(A_t, X_t, s_t; \theta_*)\right)} \\
    &\leq \sum_{i=1}^L |\Sset|\lceil m_i /\tau \rceil + 2\sigma\sqrt{6|\Sset|\lceil m_i / \tau \rceil m_i\log{n}} \\
    &\leq |\Sset|(n /\tau) + 2\sigma\sqrt{6|\Sset|(n/\tau) n\log{n}}.
\end{align*}
Here we use that for any segment $i$, we have $\lceil m_i /\tau \rceil \leq (m_i + \tau) / \tau$ for any number of rounds $m$, and that $\sum_i m_i = n - L\tau$ because we omitted rounds to close to a change-point.
Combining the bounds for all terms in \eqref{eqn:sw_regret_event_decomposition} yields,
\begin{align*}
    \Regret(n; \theta_*, s_{1:n})
    &\leq L\tau + 2|\Sset| + |\Sset|(n/\tau) + 2\sigma\sqrt{6|\Sset|(n/\tau)n \log n},
\end{align*}
When $L$ is known, we can solve for the optimal window length $\tau = \mathcal{O}(n^{2/3}\sqrt{|\Sset|\log n / L})$, which when substituted into the regret bound yields
$
    \Regret(n; \theta_*, s_{1:n}) = \mathcal{O}(n^{2/3}\sqrt{|\Sset|L\log n})
$
, as desired.

\subsection{Proof of \cref{thm:ts_regret}}
\label{sec:ts_regret_proof}

From the Bayes regret formulation in \eqref{eq:bayes_regret}, the true latent state sequence $S_{1:n} \in \Sset^n$ is random for a fixed transition model $\phi_*$. Here we still assume a fixed reward model $\theta_*$. We have that the optimal action $A_{t, *} = \arg\max_{a \in \Aset} \mu(a, X_t, S_t; \theta_*)$ is random not only due to context, but also latent state $S_t$. We also have that $L = \sum_{t = 1}^n \indicator{S_t \neq S_{t-1}}$ is random due to latent state sequence $S_{1:n}$. 

Similar to \citet{russo_posterior_sampling}, we reduce our analysis of \mts to analysis of \mucb as done in \cref{lem:ucb_regret}. We define $U_t(a) = \arg\max_{s \in C_t}\mu(a, X_t, s; \theta_*)$ where the $C_t$ is as in \mucb. Recall that the Bayes regret is given by \eqref{eq:bayes_regret}, and can be decomposed as \eqref{eqn:posterior_regret_decomposition}. In \cref{sec:ucb_regret_proof}, we bounded an equivalent regret decomposition for any $\theta_*, S_{1:n}$ and therefore also in expectation over $S_{1:n} \sim \phi_*$. We have the Bayes regret bound,
\begin{align*}
    \Bregret(n; \theta_*, \phi_*) 
    &= \E{}{\sum_{t=1}^n \left(\mu(A_{t, *}, X_t, S_t; \theta_*) - U_t(A_{t, *})\right) \mid \theta_*, \phi_*}
    + \E{}{\sum_{t=1}^n \left(U_t(A_t) - \mu(A_t, X_t, S_t; \theta_*)\right) \mid \theta_*, \phi_*} \\
    &\leq \E{S_{1:n} \sim \phi_*}{L\tau + 2|\Sset| + |\Sset|(n/\tau) + 2\sigma\sqrt{6|\Sset|(n/\tau)n \log n}},
\end{align*}
where we directly substitute the upper bound in \cref{lem:ucb_regret} inside the expectation. 

Since $\phi_*$ is known, we can define $p = 1 - \min_{s \in \Sset} P(s \mid s; \phi_*)$ as the maximum probability of a change occurring. Then number of change-points $L - 1$ is a binomial random variable, so that $\E{S_{1:n} \sim \phi_*}{L} = 1 + p n = \tilde{L}$. 
For optimal choice of $\tau = \mathcal{O}(n^{2/3}\sqrt{|\Sset|\log n / L})$, we can simplify the expectation over random $L$ to yield,
\begin{align*}
    \Bregret(n; \theta_*, \phi_*) 
    = \mathcal{O}\left(n^{2/3}\sqrt{|\Sset|\E{S_{1:n} \sim \phi_*}{L}\log n}\right) 
    = \mathcal{O}\left(n^{2/3}\sqrt{|\Sset|\tilde{L}\log n}\right),
\end{align*}
where we use Jensen's inequality and that the expression inside the expectation is concave in $L$.

\subsection{Proof of \cref{thm:ts_regret_prior}}
\label{sec:regret_proof_prior}

From the Bayes regret formulation in \eqref{eq:bayes_regret_2}, both the reward and transition model parameters $\theta_*, \phi_*$ are now random according to priors $P_1(\theta), P_1(\phi)$, respectively. We have that the optimal action $A_{t, *} = \arg\max_{a \in \Aset} \mu(a, X_t, S_t; \theta_*)$ is random due to context $X_t$, latent state $S_t$, and model $\theta_*$.

Recall that given prior $P_1(\theta)$, we have that $\bar{\mu}(a, x, s) = \int_\theta \mu(a, x, s; \theta) P_1(\theta) d \theta$ is the mean conditional reward marginalized with respect to the prior. We make one small change to \eqref{eqn:ucb_gap} in \mucb, which accounts for uncertainty: for round $t$ and state $s$, instead of acting according to true means $\mu(A_t, X_t, s; \theta_*)$, we act conservatively according to the mean marginalized over the prior $\bar{\mu}(A_t, X_t, s)$. Formally, the \say{gap} in \umucb is redefined as
\begin{align}
    \label{eqn:ucb_gap_prior}
    G_{t}(s) = \sum_{\ell = \max\{1, t - \tau\}}^{t-1} \indicator{B_\ell = s}\left(\bar{\mu}(A_\ell, X_\ell, s) - \varepsilon - R_\ell\right).
\end{align}
The additional $\varepsilon$ ensures that we do not mistakenly eliminate the true latent state from $C_t$ due to a prediction error. 

Our proof uses the following regret bound for \umucb, which is for a fixed $\theta_*$ sampled from the prior.
\begin{lemma}
For fixed model parameters $\theta_*$, assume that there exist $\varepsilon > 0$ such that $\theta_*$ satisfies the following:
$
\left\{\forall a \in \Aset, x \in \Xset, s \in \Sset: |\bar{\mu}(a, x, s) - \mu(a, x, s; \theta_*)| \leq \varepsilon\right\}.
$
Then for optimal $\tau$, the $n$-round regret of $\umucb$ is
\begin{align*}
    \Regret(n; \theta_*, s_{1:n})
    &= \mathcal{O}\left(\varepsilon n + n^{2/3}\sqrt{|\Sset|L\log n}\right).
\end{align*}
\label{lem:ucb_regret_2}
\vspace{-0.1in}\end{lemma}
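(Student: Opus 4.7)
The plan is to follow the proof of \cref{lem:ucb_regret} closely, carefully tracking the $\varepsilon$-gap between the prior-marginalized mean $\bar{\mu}$ used by \umucb and the true mean $\mu(\cdot; \theta_*)$. Under the stated hypothesis, $\bar{\mu}(a,x,s) - \varepsilon \leq \mu(a,x,s;\theta_*) \leq \bar{\mu}(a,x,s) + \varepsilon$ holds pointwise, so the misspecification should surface only as an additive $\varepsilon n$ term on top of the bound from the known-model case.

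I would start by redefining the upper confidence bound as $U_t(a) = \max_{s \in C_t} \bar{\mu}(a, X_t, s)$ and applying the regret decomposition of \eqref{eqn:ucb_regret_decomposition}, isolating rounds within $\tau$ of a change-point as in \eqref{eqn:sw_regret_event_decomposition} (contributing $L\tau$). The crucial design choice is to define the high-probability event using the true means,
\begin{align*}
  E_t = \bigg\{\forall s \in \Sset: \bigg|\sum_{\ell = \max\{1, t-\tau\}}^{t-1} \indicator{B_\ell = s}\bigl(\mu(A_\ell, X_\ell, s_\ell; \theta_*) - R_\ell\bigr)\bigg| \leq \sigma\sqrt{6 N_t(s)\log n}\bigg\},
\end{align*}
so that the martingale argument of \cref{thm:ucb_concentration} applies verbatim and yields $\prob{\bar E} \leq 2|\Sset|/n$. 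The hypothesis on $\varepsilon$ then connects $E$ to the modified gap $G_t$ in \eqref{eqn:ucb_gap_prior}: since $\bar{\mu}(A_\ell, X_\ell, s_t) - \varepsilon \leq \mu(A_\ell, X_\ell, s_t;\theta_*)$, under $E$ we have $G_t(s_t) \leq \sigma\sqrt{6 N_t(s_t) \log n}$ and hence $s_t \in C_t$. The reverse inequality $\mu(A_{t,*}, X_t, s_t;\theta_*) \leq \bar{\mu}(A_{t,*}, X_t, s_t) + \varepsilon \leq U_t(A_{t,*}) + \varepsilon$ then bounds the first term of the decomposition by $\varepsilon n$.

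For the remaining confidence-width term, I would adapt \cref{thm:ucb_interval_width} by decomposing, per round in each stationary segment,
\begin{align*}
U_t(A_t) - \mu(A_t, X_t, s_t;\theta_*) = \bigl[\bar{\mu}(A_t, X_t, B_t) - \varepsilon - R_t\bigr] + \bigl[R_t - \mu(A_t, X_t, s_t;\theta_*)\bigr] + \varepsilon,
\end{align*}
so that the first bracket is controlled by $G_{t_i}(B_t) \leq \sigma\sqrt{6 N_{t_i}(B_t) \log n}$ via $B_t \in C_{t_i}$, the second by the event $E_{t_i}$, and the third contributes a fresh $\varepsilon$ per round. Summing across the $L$ stationary segments by Cauchy--Schwarz exactly as in \cref{lem:ucb_regret} and optimizing $\tau = \Theta(n^{2/3}\sqrt{|\Sset|\log n / L})$ then yields the claimed bound.

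The main obstacle is the bookkeeping: making sure that the two places where $\varepsilon$ enters, once inside the definition of $G_t$ and once in the triangle inequality relating $\bar{\mu}$ to $\mu(\cdot;\theta_*)$, combine to give only $\mathcal{O}(\varepsilon)$ regret per round rather than something that interacts multiplicatively with the $\sqrt{n/\tau}$ confidence-width terms. Once that is handled, the rest is a direct translation of the known-model proof.
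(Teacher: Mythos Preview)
Your proposal is correct and follows essentially the same approach as the paper: the same concentration event $E_t$ based on true means, the same argument that $s_t \in C_t$ under $E$ via $\bar{\mu}(A_\ell,X_\ell,s_t)-\varepsilon \le \mu(A_\ell,X_\ell,s_t;\theta_*)$, and the same treatment of the confidence-width term by pulling out $\varepsilon$ per round and then reducing to the gap $G_t$ of \eqref{eqn:ucb_gap_prior}. Your per-round decomposition $U_t(A_t) - \mu(A_t,X_t,s_t;\theta_*) = [\bar{\mu}(A_t,X_t,B_t)-\varepsilon-R_t] + [R_t-\mu(A_t,X_t,s_t;\theta_*)] + \varepsilon$ is exactly what the paper means by ``introduce a $\varepsilon n$ term'' and ``bounding the sum of confidence widths using the gap,'' just spelled out more explicitly.
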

\begin{proof}
We have the same regret decomposition for $n$-round regret, stated in \eqref{eqn:sw_regret_event_decomposition}.
The analysis proceeds similarly to \cref{sec:ucb_regret_proof}, only we need to additionally account for prediction error in the conditional mean rewards. We only highlight the differences, and defer other details of the proof to \cref{sec:ucb_regret_proof}.

Using \cref{thm:ucb_concentration}, and that the total regret is bounded by $n$, we again have the second term in \eqref{eqn:sw_regret_event_decomposition} can be bounded by,
$n\prob{\bar{E}} \leq 2|\Sset|$.
Bounding the third term in \eqref{eqn:sw_regret_event_decomposition} requires a slight change. For round $t \in \mathcal{T}$, we have that the event $\mu(A_{t, *}, X_t, s_t; \theta_*)  - U_t(A_{t, *}) > \varepsilon$ occurs only if $s_t \notin C_t$. By the design of $C_t$ in \umucb, this happens only if $G_t(s_t) > \sigma\sqrt{6 N_t(s)\log n}$, since
\begin{align*}
G_t(s_t) = \smashoperator[r]{\sum_{\ell = \max\{1, t - \tau\}}^{t-1}} 
    \indicator{B_\ell = s_t} \left(\bar{\mu}(A_\ell, X_\ell, s_t) - \varepsilon - R_\ell\right) 
    \leq \smashoperator[r]{\sum_{\ell = \max\{1, t - \tau\}}^{t-1}} 
    \indicator{B_\ell = s_*} \left(\mu(A_\ell, X_\ell, s_t; \theta_*) - R_\ell\right).
\end{align*}
Event $E_t$ says that the opposite is true for all states, including true state $s_t$. So the third term in \eqref{eqn:sw_regret_event_decomposition} is at most $\varepsilon n$. 

For the last term in \eqref{eqn:sw_regret_event_decomposition}, we need to account for the fact that $\varepsilon$ is included in the gap $G_t(s)$ for every round $t$ and state $s$.
To do so, we introduce a $\varepsilon n$ term in the expression as,
\begin{align*}
    \E{}{\indicator{E} \sum_{t \in \mathcal{T}} \left(U_t(A_t) - \mu(A_t, X_t, s_t; \theta_*\right)}
    \leq
    \varepsilon n + \E{}{\indicator{E} \sum_{t \in \mathcal{T}} \left(U_t(A_t) - \varepsilon + \mu(A_t, X_t, s_t; \theta_*)\right)}
\end{align*}
The second term on the right-hand side can be bounded the same way as in \cref{sec:ucb_regret_proof} by introducing the realized reward, and bounding the sum of confidence widths using the gap given in \eqref{eqn:ucb_gap_prior}. 

This yields the bound on total regret,
\begin{align*}
    \Regret(n; \theta_*, s_{1:n})
    &\leq L\tau + 2\varepsilon n + |\Sset|(n/\tau + 2) + 2\sigma\sqrt{6|\Sset|(n/\tau)n \log n}.
\end{align*} 
Solving for optimal window length in terms of $L$ yields $\tau = \mathcal{O}(n^{2/3}\sqrt{|\Sset|\log n / L})$, which when substituted into the regret gives
$
    \Regret(n; \theta_*, s_{1:n}) = \mathcal{O}(\varepsilon n + n^{2/3}\sqrt{|\Sset|L\log n})
$
, as desired.
\end{proof}

In order to prove \cref{thm:ts_regret_prior}, we again reduce to the proof of \cref{lem:ucb_regret_2} for \umucb. We define $U_t(a) = \arg\max_{s \in C_t}\mu(a, X_t, s; \theta_*)$ as in \umucb. We also define event 
$$
\mathcal{E} = \{\forall a \in \Aset, x \in \Xset, s \in \Sset: \abs{\bar{\mu}(a, x, s) - \mu(a, x, s; \theta_*)} \leq \varepsilon\},
$$
for when the sampled true model $\theta_*$ behaves close to expected, and $\bar{\mathcal{E}}$ as its complement. 
If $\mathcal{E}$ does not hold, then the best possible upper bound on regret is $n$; fortunately, we assume in the statement of the theorem that the probability of that occurring is bounded by $\delta$. So we can bound the $n$-round Bayes regret as
\begin{align*}
    \Bregret(n)
    = \E{}{\indicator{\bar{\mathcal{E}}} \Regret(n, \theta_*, s_{1:n})} + \E{}{\indicator{\mathcal{E}} \Regret(n; \theta_*, s_{1:n})}
    \leq \delta n + \E{}{\indicator{\mathcal{E}} \Regret(n; \theta_*, s_{1:n})}.
\end{align*}
The second term can be decomposed as in \eqref{eqn:posterior_regret_decomposition} and bounded by \cref{lem:ucb_regret_2} as the bound in the lemma is worst-case over any model parameters $\theta_*$ and sequence $S_{1:n}$. We have the Bayes regret bound,
\begin{align*}
    \Bregret(n)
    &\leq \delta n + 2\varepsilon n + \E{\phi_* \sim P_1}{L\tau + 2|\Sset| + |\Sset|(n/\tau) + 2\sigma\sqrt{6|\Sset|(n/\tau)n \log n}}.
\end{align*}
Here $\phi_*$ is random, and hence number of stationary segments $L$ is also random. Let $p$ denote the maximum probability of change, i.e., for fixed $\phi_*$, we have $p = 1 - \min_{s \in \Sset} P(s \mid s; \phi_*)$ as in \cref{thm:ts_regret}. Unlike in \cref{thm:ts_regret}, we have that $p$ is random as well due to randomness in $\phi_*$. Recall from the statement of \cref{thm:ts_regret_prior} that $(\alpha_{s, s'})_{s, s' \in \Sset \times \Sset}$ are the prior parameters of $P_1(\phi)$. We can write 
$
  \E{\phi_* \sim P_1}{p} \leq 1 - \min_{s \in \Sset} \alpha_{s, s} / \sum_{s' \in \Sset} \alpha_{s, s'} = \tilde{p}.
$
This means we can bound the expected value of $L$ as,
\begin{align*}
    \E{\phi_* \sim P_1}{L}
    = \E{\phi_* \sim P_1}{\E{S_{1:n} \sim \phi_*}{L \mid \phi_*}}
    \leq \E{\phi_* \sim P_1}{1 + pn}
    \leq 1 + \tilde{p}n 
    = \tilde{L}.
\end{align*}
Since the Bayes regret is still concave in $L$, we can apply the same trick as in \cref{sec:ts_regret_proof} using Jensen's inequality, and yield, for optimal choice of $\tau = \mathcal{O}(n^{2/3}\sqrt{|\Sset|\log n / L})$, the desired Bayes regret bound
\begin{align*}
    \Bregret(n) 
    = \mathcal{O}\left(\delta n + \varepsilon n  + n^{2/3}\sqrt{|\Sset|\E{\phi_* \sim P_1}{L}\log n}\right) 
    = \mathcal{O}\left(\delta n + \varepsilon n + n^{2/3}\sqrt{|\Sset|\tilde{L}\log n}\right).
\end{align*}

\end{document}